%% file: main.tex
\documentclass{article}

\usepackage[preprint]{corl_2026} 

\usepackage[utf8]{inputenc} 
\usepackage[T1]{fontenc}    
\usepackage{url}            
\usepackage{booktabs}       
\usepackage{nicefrac}       
\usepackage{microtype}      
\usepackage{xcolor}         

\usepackage{graphicx}
\usepackage{xurl}
\usepackage{amsmath}
\usepackage{amssymb}
\usepackage{mathtools}
\usepackage{amsthm}
\usepackage{amsfonts}
\usepackage{bm}
\usepackage{multirow}
\usepackage{adjustbox}
\usepackage{threeparttable}
\usepackage{bbding}
\usepackage{colortbl}
\usepackage{array}
\usepackage{enumitem}
\usepackage{caption}
\usepackage{mdframed}
\usepackage[most]{tcolorbox}
\usepackage{epigraph}
\usepackage{arydshln}
\usepackage{titletoc}
\usepackage{pifont}
\usepackage{soul}

\usepackage{tikz}
\usetikzlibrary{positioning,arrows.meta,fit,backgrounds,calc}

\usepackage[capitalize]{cleveref}
\crefname{section}{Sec.}{Secs.}
\Crefname{section}{Section}{Sections}
\crefname{appendix}{appendix}{appendices}
\Crefname{appendix}{Appendix}{Appendices}
\Crefname{table}{Table}{Tables}
\crefname{table}{Tab.}{Tabs.}
\theoremstyle{plain}
\newtheorem{theorem}{Theorem}[section]
\newtheorem{proposition}[theorem]{Proposition}

\theoremstyle{definition}

\theoremstyle{remark}

\setlist[itemize]{
  leftmargin=1.2em,
  itemsep=0pt,
  topsep=0pt,
  parsep=0pt
}

\usepackage{algpseudocode}
\usepackage{algorithm}
\usepackage{tabularx}
\usepackage{tabularray}
\makeatletter
\newcommand{\multiline}[1]{%
  \begin{tabularx}{\dimexpr\linewidth-\ALG@thistlm}[t]{@{}X@{}}
    #1
  \end{tabularx}
}
\makeatother

\definecolor{royalblue}{rgb}{0.25, 0.41, 0.88}
\definecolor{mediumgray}{rgb}{0.5,0.5,0.5}
\definecolor{crimsonred}{RGB}{180, 30, 40}

\definecolor{gray}{HTML}{b2b2b2}
\definecolor{lightgray}{HTML}{cdcdcd}
\definecolor{lightblue}{HTML}{e6ebf2}
\definecolor{lightpurple}{HTML}{EEE5F4}

\definecolor{darkblue}{HTML}{718fc1}
\definecolor{darkpurple}{HTML}{c87ea5}
\definecolor{darkgray}{HTML}{555555}

\newcommand{\cmark}{\textcolor{darkgray}{\ding{51}}}
\newcommand{\xmark}{\textcolor{gray}{\ding{55}}}
\newcommand{\secondbest}[1]{\underline{#1}}
\newcommand{\pmsd}[1]{\,{\color{mediumgray}{\tiny $\pm$\,#1}}}

\newcommand{\tablestyle}[2]{\setlength{\tabcolsep}{#1}\renewcommand{\arraystretch}{#2}\centering\footnotesize}

\newmdenv[linecolor=light-gray,backgroundcolor=light-gray]{graybox}

\input{math_commands.tex}

\hypersetup{
    colorlinks=true,
    citecolor=royalblue, 
    linkcolor=royalblue, 
    filecolor=magenta,      
    urlcolor=royalblue
}

\title{CRAFT: Counterfactual-to-Interactive Reinforcement Fine-Tuning for Driving Policies}

\author{%
Keyu Chen$^{1}$~
Nanfei Ye$^{2}$~
Yida Wang$^{2}$~
Wenchao Sun$^{1}$~
Danqi Zhao$^{1}$~
Hao Cheng$^{1}$~
Sifa Zheng$^{1}$ \\
[2mm]
$^1$~School of Vehicle and Mobility, Tsinghua University ~
$^2$~Li Auto Inc~
}

\begin{document}

\maketitle

\begin{abstract}
Open-loop imitation learning has advanced modern autonomous driving policy architectures, but closed-loop deployment remains vulnerable to policy-induced distribution shift. Existing post-training paradigms exhibit fundamental trade-offs: closed-loop RL fine-tuning provides grounded feedback from executed actions but is constrained by the sparsity of informative events, whereas counterfactual fine-tuning provides dense supervision over candidate futures but inherits bias from imperfect future estimates. We introduce \textbf{Counterfactual-to-Interactive Reinforcement Fine-Tuning (CRAFT)}, an on-policy framework that formulates closed-loop post-training as proxy-residual optimization. CRAFT uses group-normalized counterfactual advantages as a dense proxy for real closed-loop advantages and aligns this proxy with the closed-loop world through grounded residual correction from interaction-critical events. To stabilize adaptation, CRAFT regularizes the online policy toward an EMA teacher via asymmetric KL self-distillation. Theoretically, CRAFT decomposes the real closed-loop policy gradient into proxy and residual terms under the same visited-state distribution, reducing residual variance with an aligned proxy while mitigating proxy bias through grounded residual approximation. Empirically, CRAFT achieves the strongest closed-loop gains on Bench2Drive across hierarchical planning, vision-language-action, and vocabulary-scoring architectures. Ablations, scaling behavior, stability analyses, and transfer results further validate the complementary roles of dense counterfactual proxy and grounded residual correction. Project page: \url{https://currychen77.github.io/CRAFT}.
\end{abstract}

\input{tex/1_introduction}
\input{tex/2_relatedwork}
\input{tex/3_methodology}
\input{tex/4_experiment}
\input{tex/5_conclusions}

\newpage

{\small
\bibliography{cite}
}

\newpage

\appendix
\crefalias{section}{appendix}
\crefalias{subsection}{appendix}
\crefalias{subsubsection}{appendix}

\input{tex/6_appendix}

\end{document}

%% file: math_commands.tex
\newcommand{\Var}{\mathrm{Var}}
\newcommand{\Cov}{\mathrm{Cov}}

%% file: tex/1_introduction.tex
\section{Introduction}
\label{sec:intro}

Modern autonomous driving policies have rapidly evolved beyond modular perception--planning pipelines. Representative frameworks include unified end-to-end models~\cite{hu2023_uniad,jiang2023vad,sun2025sparsedrive}, vision-language-action models~\cite{peng2025counterfactualvla,dang2026drivefine}, and world action models~\cite{xia2025drivelaw,li2025drivevlaw0}. While these architectures expand the capability of driving policies, their training remains largely governed by an open-loop imitation learning (IL) paradigm. This paradigm often leads to distribution shift due to the inherent discrepancy between open-loop training and closed-loop deployment. Therefore, further improving the closed-loop performance of pre-trained driving policies remains critical for safe and reliable deployment.

To address the challenges of distribution shift, two distinct post-training paradigms have emerged. As illustrated in \Cref{fig:intro}, closed-loop RL fine-tunes pre-trained driving policies within interactive simulators to obtain grounded feedback from executed actions. However, the inherent sparsity of informative events across diverse scenarios often constrains the training efficiency of pure closed-loop RL. Conversely, counterfactual fine-tuning offers an alternative by evaluating multiple candidate futures from the same visited state, transforming these relative performance differences into dense supervisory signals through group-relative objectives, such as GRPO~\cite{shao2024grpo}. Nevertheless, counterfactual estimates of the future remain inherently biased, as the counterfactual world cannot faithfully capture the interaction dynamics of the real closed-loop world.

\begin{figure}[t]
    \centering
    \includegraphics[width=0.98\textwidth]{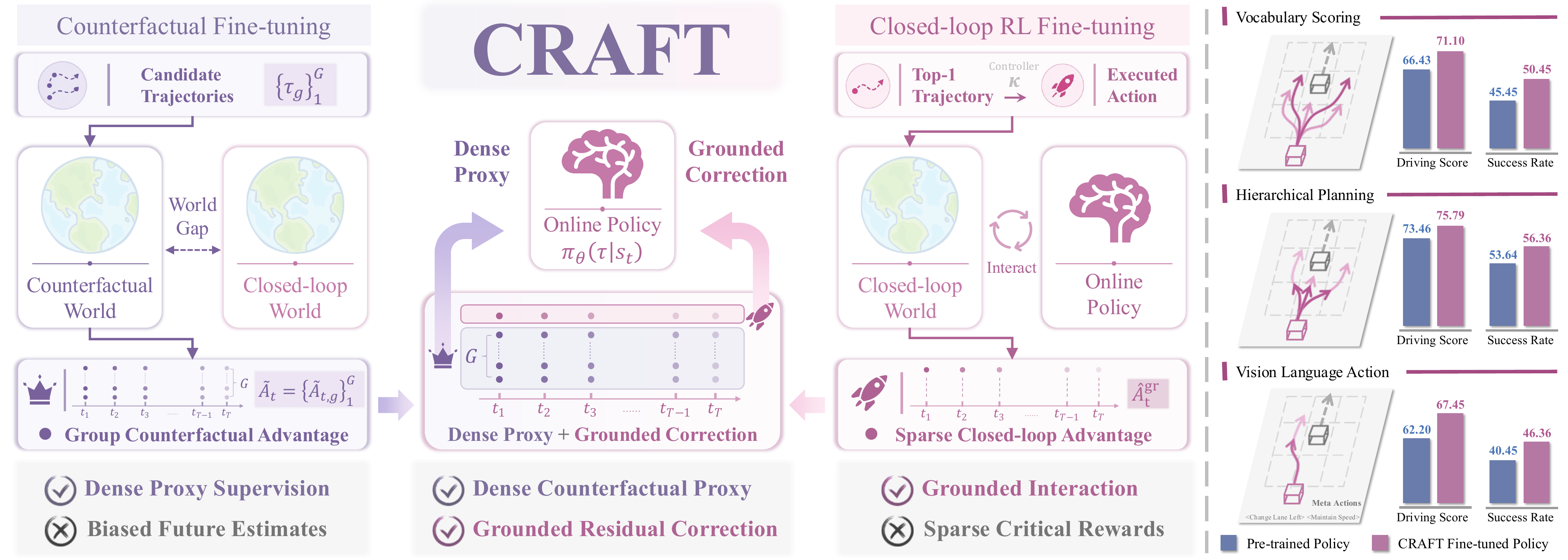}
    \caption{Two existing post-training paradigms exhibit complementary trade-offs. Closed-loop RL fine-tuning provides grounded but sparse feedback from executed actions, whereas counterfactual fine-tuning provides dense supervision from biased future estimates. CRAFT combines their complementary strengths through a dense counterfactual proxy and grounded residual correction.}
    \vspace{8pt}
    \label{fig:intro}
\end{figure}

The complementary strengths of these two paradigms motivate a natural question: can counterfactual supervision serve as a dense proxy, while closed-loop interaction corrects its residual mismatch with the real closed-loop world? CRAFT addresses this question by assigning distinct optimization roles to the two signals. Counterfactual supervision serves as a dense proxy over states visited by the current policy, whereas closed-loop interaction provides grounded residual correction to align this proxy with the real closed-loop world.

Building on this proxy-residual decomposition, we introduce Counterfactual-to-Interactive Reinforcement Fine-Tuning (CRAFT), an on-policy post-training framework for pre-trained driving policies. As illustrated in \Cref{fig:intro}, CRAFT uses the counterfactual world to evaluate groups of candidates from states visited by the online policy and converts their efficiency, safety, and rule-compliance returns into group-normalized counterfactual advantages. These advantages serve as a dense proxy for real closed-loop advantages. CRAFT then derives grounded residual correction from interaction-critical traffic events observed in executed closed-loop rollouts, aligning the proxy with the real closed-loop world through a value-free dual-clipped update~\cite{ye2020dual_clip, gao2021learning}. To stabilize adaptation, asymmetric-KL self-distillation regularizes the online policy toward an EMA teacher initialized from the pre-trained policy, preserving reliable behaviors while allowing interaction-driven correction.

Theoretically, CRAFT decomposes the real closed-loop policy gradient into proxy and residual terms under the same state distribution, reducing residual variance with an aligned proxy while mitigating proxy bias through grounded residual approximation. Dual clipping further keeps rare critical events informative without dominating the update. Empirically, CRAFT achieves the largest closed-loop gains on Bench2Drive across policy architectures, including hierarchical planning, vision-language-action, and vocabulary-scoring. Ablations verify the complementary roles of each component, with the counterfactual proxy providing broad optimization signals, grounded residual correction delivering substantial closed-loop gains on interaction-sensitive metrics, and self-distillation stabilizing adaptation around reliable pre-trained behavior. Scaling and stability analyses further demonstrate that, with increasing fine-tuning data, CRAFT delivers more consistent gains and better training stability than baselines, while transfer results indicate partial robustness beyond the training distribution. Our main contributions are summarized as follows:

\begin{itemize}[leftmargin=1.2em,itemsep=0.0em,topsep=0.1em]
\item We formulate closed-loop post-training for pre-trained driving policies through a proxy-residual gradient decomposition, revealing the trade-off between dense but biased counterfactual supervision and sparse but grounded closed-loop interaction under the same visited-state distribution.
\item We propose CRAFT, an on-policy post-training framework that turns counterfactual supervision into a dense proxy for real closed-loop advantages and uses interaction-critical closed-loop events as grounded residual correction, stabilized by self-distillation from the pre-trained policy.
\item We substantiate the proxy-residual design theoretically and empirically. CRAFT reduces residual variance with an aligned proxy while mitigating proxy bias through grounded residual approximation, and achieves consistent closed-loop gains on Bench2Drive across diverse policy architectures, including hierarchical planning, vision-language-action, and vocabulary-scoring.
\end{itemize}

%% file: tex/2_relatedwork.tex
\section{Related Work}
\label{sec:relatedwork}

\vspace{-6pt}
\paragraph{End-to-End Autonomous Driving Policy.}
Progress in end-to-end driving has been driven largely by advances in model architecture. Diffusion-based models~\cite{liao2025diffusiondrive,liu2025guideflow,zheng2025resad,zheng2026HDP}, vision-language-action models~\cite{fu2025orion,renz2025simlingo,zhou2025autovla,wang2026linkvla}, and trajectory-vocabulary scoring methods~\cite{li2024hydramdp,yao2026drivesuprim,li2025GTRS,sun2026sparsedrivev2} all improve policy expressiveness. Yet their training remains largely imitation-based, with policies trained near expert states and then expected to remain reliable after their own actions shift future observations. This assumption is especially fragile in closed-loop deployment, where small planning errors can move the ego vehicle outside the training distribution. 

\vspace{-10pt}
\paragraph{Closed-Loop RL in Autonomous Driving.}
Closed-loop RL mitigates distribution shift by optimizing policies on executed rollouts, often with policy-gradient methods such as PPO~\cite{PPO} in CARLA~\cite{fu2025minddrive,jaeger2025carl} or reconstructed simulators~\cite{gao2025rad,gao2026rad2,ni2025recondreamer}. While grounded in real interaction, these methods remain sample-inefficient because informative events are rare, requiring extensive rollouts to obtain useful gradient estimates. Recent approaches improve efficiency through VLM-based reward densification~\cite{huang2026drivevlm_rl} or world-model rollouts~\cite{li2024think2drive,yang2025raw2drive}, but their supervision depends on evaluator or model fidelity. CRAFT instead uses closed-loop interaction only to correct a dense counterfactual proxy, preserving grounded feedback without estimating full advantages from sparse rollouts.

\vspace{-10pt}
\paragraph{Counterfactual Post-Training in Autonomous Driving.}
Counterfactual post-training improves learning efficiency through dense local comparisons on real visited states~\cite{liu2026R2SE}, making preference-based and group-relative objectives attractive~\cite{shao2024grpo,rafailov2023dpo}. Recent methods combine PDM Score~\cite{li2025recogdrive,zou2025diffusiondrivev2,yasarla2026gero,shang2026dynvla,liang2025DIPOLE,chen2026Curious_VLA} with GRPO in NAVSIM~\cite{dauner2024navsimv1} for policy fine-tuning, or use synthetic counterfactual data~\cite{lin2025MPA,yan2025ad_r1} for offline post-training. A key limitation lies in the rollout condition, since counterfactual futures generated offline or in a non-reactive simulator do not capture true closed-loop interaction with the environment. CRAFT uses counterfactual futures as a dense proxy on real visited states, while grounded residual correction refines the proxy with feedback from true closed-loop rollouts.

%% file: tex/3_methodology.tex
\section{Methodology}
\label{sec:method}

CRAFT is a general paradigm for closed-loop post-training of supervised driving policies. The target is to improve closed-loop performance under the on-policy state distribution. Existing post-training paradigms present a fundamental trade-off: closed-loop RL is physically grounded but relies on sparse, high-variance interaction feedback, whereas counterfactual optimization provides dense supervision but suffers from biased future estimates. CRAFT addresses this trade-off with two complementary components: a dense counterfactual proxy that provides trajectory-level supervision via candidate evaluation, and a grounded residual correction that compensates for proxy bias using executed closed-loop feedback. To preserve pre-trained behavior during fine-tuning, CRAFT further applies on-policy self-distillation with an EMA teacher. The framework is illustrated in \Cref{fig:teaser}.

\begin{figure}[t]
    \centering
    \includegraphics[width=\textwidth]{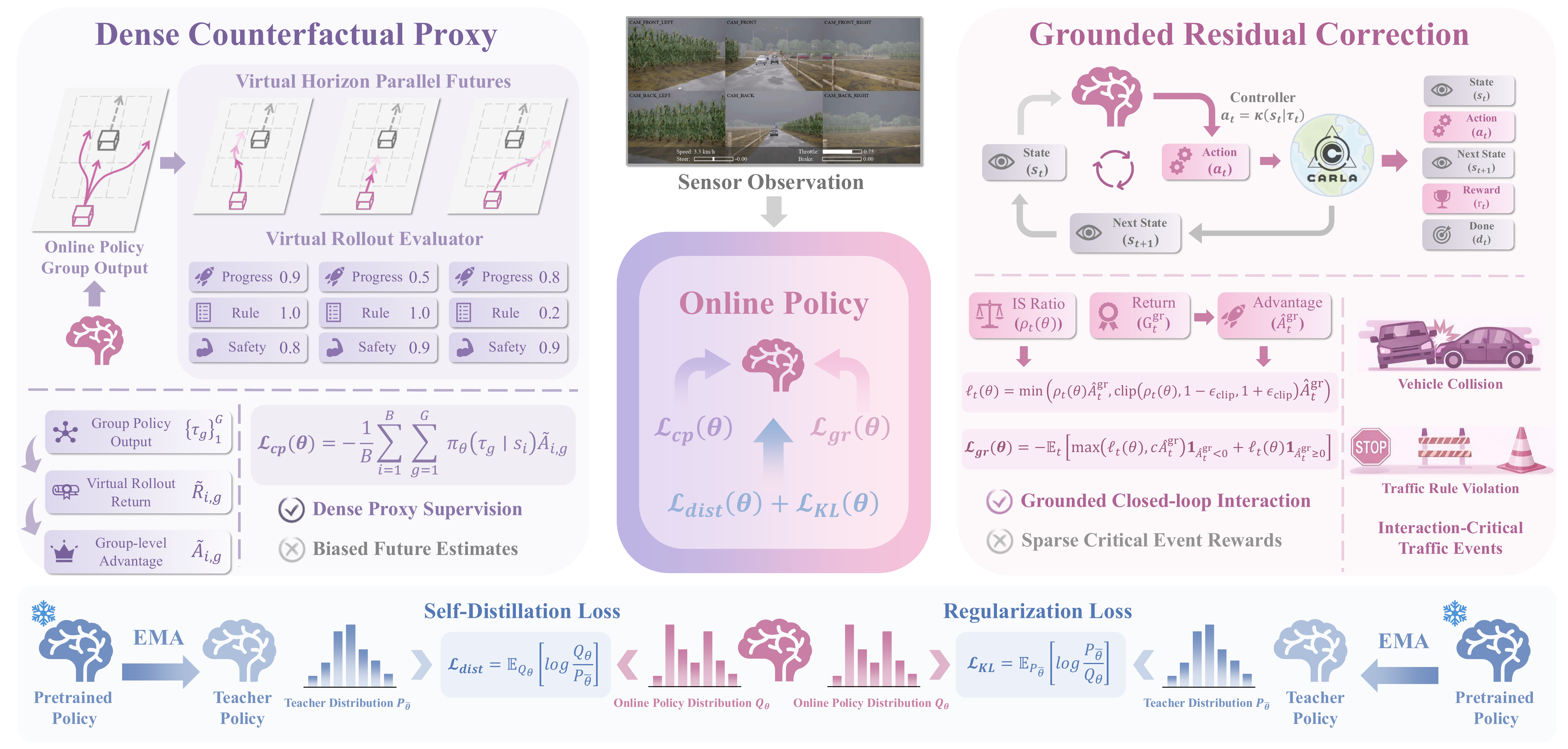}
\caption{CRAFT combines dense counterfactual proxy supervision, grounded residual correction from real closed-loop interaction, and on-policy self-distillation toward reliable pre-trained behavior.}
    \vspace{8pt}
    \label{fig:teaser}
\end{figure}

\subsection{Real Closed-Loop Objective and Proxy–Residual Decomposition}
\label{subsec:unified}

We first formalize the real closed-loop objective that CRAFT aims to improve. We then show that CRAFT optimizes it through two complementary terms defined on the same on-policy state distribution. The key idea is to use dense counterfactual supervision as a proxy for the full closed-loop learning signal, while using grounded interaction feedback to correct the residual mismatch.

Let $\mathcal{M}_{\mathrm{real}}=(\mathcal{S},\mathcal{A},P_{\mathrm{real}},r_{\mathrm{real}},\gamma)$ denote the real closed-loop MDP, with state space $\mathcal{S}$, low-level action space $\mathcal{A}$, transition function $P_{\mathrm{real}}$, reward function $r_{\mathrm{real}}$, and discount factor $\gamma\in(0,1)$. The policy scores candidate trajectories $\tau\in\mathcal{T}$ by $\pi_\theta(\tau\mid s)$, and the controller $\kappa$ tracks the selected trajectory $\tau_t$ by executing $a_t=\kappa(s_t,\tau_t)$, with rewards evaluated after the induced controller action. Let $d_{\mathrm{real}}^{\pi_\theta}$ be the discounted state-visitation measure. For closed-loop rollout $\xi$, the objective is
\begin{equation}
J_{\mathrm{real}}(\pi_\theta)
=
\mathbb{E}_{\xi \sim (\pi_\theta,\kappa,P_{\mathrm{real}})}
\Big[
\sum_{t=0}^{\infty}\gamma^t r_{\mathrm{real}}(s_t,a_t,s_{t+1})
\Big],
\end{equation}
whose policy gradient is
\begin{equation}
\nabla_\theta J_{\mathrm{real}}(\pi_\theta)
=
\mathbb{E}_{s\sim d_{\mathrm{real}}^{\pi_\theta},\,\tau\sim \pi_\theta(\cdot\mid s)}
\Big[
\nabla_\theta \log \pi_\theta(\tau\mid s)\,
A^{\mathrm{real}}_{\pi_\theta}(s,\tau)
\Big].
\label{eq:real_pg}
\end{equation}
Here $A^{\mathrm{real}}_{\pi_\theta}(s,\tau)$ denotes the real advantage of selecting $\tau$ and executing its induced controller action $a$. Grounded rollouts alone are sample-inefficient because rare critical events provide weak learning signal for most visited states, while the resulting failures dominate return variance. Consequently, value estimation is brittle and value-free optimization suffers from high-variance updates.

CRAFT introduces a dense counterfactual proxy on real visited states. For each $s\sim d_{\mathrm{real}}^{\pi_\theta}$, the counterfactual world assigns each candidate trajectory $\tau$ a proxy value $\Phi_{\pi_\theta}(s,\tau)$, such as a short-horizon rollout score or a group-relative preference. As $\Phi_{\pi_\theta}$ only approximates the real advantage, its residual mismatch is
$\Delta_{\pi_\theta}(s,\tau)\triangleq A^{\mathrm{real}}_{\pi_\theta}(s,\tau)-\Phi_{\pi_\theta}(s,\tau)$.

\begin{proposition}[Exact Counterfactual-Residual Decomposition]
\label{prop:exact_decomposition}
For any integrable dense counterfactual proxy $\Phi_{\pi_\theta}(s,\tau)$ defined on real visited states, the closed-loop gradient decomposes as
\begin{equation}
\nabla_\theta J_{\mathrm{real}}(\pi_\theta)
=
\mathbb{E}
\Big[
\nabla_\theta \log \pi_\theta(\tau\mid s)\,
\Phi_{\pi_\theta}(s,\tau)
\Big]
+
\mathbb{E}
\Big[
\nabla_\theta \log \pi_\theta(\tau\mid s)\,
\Delta_{\pi_\theta}(s,\tau)
\Big],
\label{eq:exact_decomposition}
\end{equation}
where both expectations are over $s\sim d_{\mathrm{real}}^{\pi_\theta}$ and $\tau\sim\pi_\theta(\cdot\mid s)$. The proof is provided in \Cref{app:proof_exact_decomposition}.
\end{proposition}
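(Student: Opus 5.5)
The decomposition is essentially algebraic: we start from the policy-gradient identity \eqref{eq:real_pg} and split the real advantage pointwise. By definition of the residual, $A^{\mathrm{real}}_{\pi_\theta}(s,\tau)=\Phi_{\pi_\theta}(s,\tau)+\Delta_{\pi_\theta}(s,\tau)$ for every pair $(s,\tau)$. Substituting this into the integrand of \eqref{eq:real_pg} gives
\begin{equation*}
\nabla_\theta \log \pi_\theta(\tau\mid s)\,A^{\mathrm{real}}_{\pi_\theta}(s,\tau)
=
\nabla_\theta \log \pi_\theta(\tau\mid s)\,\Phi_{\pi_\theta}(s,\tau)
+
\nabla_\theta \log \pi_\theta(\tau\mid s)\,\Delta_{\pi_\theta}(s,\tau).
\end{equation*}
We then take the expectation over $s\sim d_{\mathrm{real}}^{\pi_\theta}$ and $\tau\sim\pi_\theta(\cdot\mid s)$ and use linearity. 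Both resulting terms are under the same measure, because $\Phi_{\pi_\theta}$ is evaluated on real visited states. No change of state distribution occurs, so no importance weights appear.

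The only step needing care is justifying linearity: we must check that each of the two expectations exists separately. The proposition assumes $\Phi_{\pi_\theta}$ is integrable, and we will read this as integrability of the score-weighted product $\nabla_\theta\log\pi_\theta\,\Phi_{\pi_\theta}$ under the joint measure. Integrability of $\nabla_\theta\log\pi_\theta\,A^{\mathrm{real}}_{\pi_\theta}$ is implicit in the validity of \eqref{eq:real_pg}, which is the standard policy gradient theorem; we assume it as stated earlier in the paper, with the candidate-trajectory action and the deterministic controller $\kappa$ folded into the MDP. Writing $\Delta_{\pi_\theta}=A^{\mathrm{real}}_{\pi_\theta}-\Phi_{\pi_\theta}$, the residual term is a difference of two integrable functions and is therefore integrable. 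The split is then exact.

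The main obstacle is this interpretation of the integrability hypothesis. If the hypothesis only means $\Phi_{\pi_\theta}\in L^1$, we would additionally need a bounded score, for instance bounded $\|\nabla_\theta\log\pi_\theta\|$ over a finite candidate set $\mathcal{T}$, or an application of Cauchy--Schwarz with square-integrable score and proxy. In the first case, $\|\nabla_\theta\log\pi_\theta\,\Phi_{\pi_\theta}\|\le C\,|\Phi_{\pi_\theta}|$. We will state whichever of these mild conditions is needed and then conclude \eqref{eq:exact_decomposition}.

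Finally, we will remark that the decomposition holds for an arbitrary proxy, since it is an identity. Its usefulness lies entirely in how well $\Phi_{\pi_\theta}$ aligns with $A^{\mathrm{real}}_{\pi_\theta}$, and that is not part of this proposition.
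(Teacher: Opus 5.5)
Your proposal is correct and follows the paper's proof exactly: you substitute $A^{\mathrm{real}}_{\pi_\theta}=\Phi_{\pi_\theta}+\Delta_{\pi_\theta}$ into \eqref{eq:real_pg} and split the expectation by linearity under the common measure $s\sim d_{\mathrm{real}}^{\pi_\theta}$, $\tau\sim\pi_\theta(\cdot\mid s)$. Your extra check that both score-weighted terms are separately integrable is something the paper's proof skips, and it is a worthwhile addition rather than a change of route.
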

\noindent The proxy term captures the dense component of the real objective, while the residual term accounts for the mismatch between the proxy and real advantage. Since both terms are defined on real visited states, the decomposition introduces no state-distribution mismatch.

The decomposition can reduce variance when the proxy is aligned with the real advantage, since the residual term only needs to estimate the remaining mismatch through grounded rollouts.

\begin{proposition}[Conditional Variance Reduction]
\label{prop:variance_reduction}
Fix a state $s$ and let $Y(s,\tau)=A^{\mathrm{real}}_{\pi_\theta}(s,\tau)$ and $C(s,\tau)=\Phi_{\pi_\theta}(s,\tau)$. For $R_\alpha(s,\tau)=Y(s,\tau)-\alpha C(s,\tau)$ with $\tau\sim\pi_\theta(\cdot\mid s)$ and $\Var_\tau[C(s,\tau)]>0$,
\begin{equation}
\Var_\tau[R_\alpha(s,\tau)]
=
\Var_\tau[Y(s,\tau)]
+
\alpha^2\Var_\tau[C(s,\tau)]
-2\alpha\Cov_\tau[Y(s,\tau),C(s,\tau)],
\label{eq:variance_identity}
\end{equation}
and the minimizing coefficient is $\alpha^\star(s)=\Cov_\tau[Y(s,\tau),C(s,\tau)]/\Var_\tau[C(s,\tau)]$. In particular, if $\Cov_\tau[Y(s,\tau),C(s,\tau)]>\frac12\Var_\tau[C(s,\tau)]$, then $\Var_\tau[Y(s,\tau)-C(s,\tau)]<\Var_\tau[Y(s,\tau)]$. If this condition holds for $d_{\mathrm{real}}^{\pi_\theta}$-almost every $s$, the same strict inequality holds after averaging over $s$.
\end{proposition}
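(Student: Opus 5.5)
The argument is elementary second-moment algebra carried out conditionally on a fixed state $s$, followed by one averaging step over $s$. Throughout we assume $Y(s,\cdot)$ and $C(s,\cdot)$ are square-integrable under $\pi_\theta(\cdot\mid s)$, so every variance and covariance is finite. This is the natural strengthening of the integrability assumed in \Cref{prop:exact_decomposition}.

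\emph{Step 1: the identity.} Write $\bar Y=\mathbb{E}_\tau[Y]$ and $\bar C=\mathbb{E}_\tau[C]$. Then $R_\alpha-\mathbb{E}_\tau[R_\alpha]=(Y-\bar Y)-\alpha(C-\bar C)$. Squaring and taking $\mathbb{E}_\tau$ expands bilinearly and gives \eqref{eq:variance_identity} directly.

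\emph{Step 2: the minimizer.} The right-hand side of \eqref{eq:variance_identity} is a quadratic in $\alpha$ with leading coefficient $\Var_\tau[C]>0$, so it is strictly convex. Setting the derivative $2\alpha\Var_\tau[C]-2\Cov_\tau[Y,C]$ to zero gives the unique minimizer $\alpha^\star(s)=\Cov_\tau[Y,C]/\Var_\tau[C]$.

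\emph{Step 3: the case $\alpha=1$.} Substituting $\alpha=1$ into \eqref{eq:variance_identity} gives
\[
\Var_\tau[Y-C]-\Var_\tau[Y]=\Var_\tau[C]-2\Cov_\tau[Y,C].
\]
The hypothesis $\Cov_\tau[Y,C]>\tfrac12\Var_\tau[C]$ makes this difference strictly negative. Equivalently, $\alpha=1$ lies strictly inside the interval $(0,2\alpha^\star)$ on which the quadratic is below its value at $\alpha=0$.

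\emph{Step 4: averaging over states.} This is the only point needing care. "Averaging over $s$" should be read as comparing $\mathbb{E}_{s\sim d_{\mathrm{real}}^{\pi_\theta}}\Var_\tau[Y-C]$ with $\mathbb{E}_{s}\Var_\tau[Y]$, that is, the expected conditional variances. Define $g(s)=\Var_\tau[Y]-\Var_\tau[Y-C]$. Under the stated condition, $g(s)>0$ for almost every $s$. Assume both expectations are finite, which again follows from square-integrability of $Y$ and $C$ under the joint law of $(s,\tau)$. A measurable function that is strictly positive almost everywhere has strictly positive integral against a probability measure. This holds because $\{g>0\}=\bigcup_n\{g>1/n\}$, so some set $\{g>1/n\}$ has positive measure. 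Hence $\mathbb{E}_s[g]>0$, which is the claimed strict inequality.

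Two remarks on scope. First, the statement in terms of total variance $\Var_{s,\tau}$ does not follow without extra assumptions. By the law of total variance, it would also require comparing the between-state terms $\Var_s(\mathbb{E}_\tau[Y-C])$ and $\Var_s(\mathbb{E}_\tau[Y])$. Second, in the paper's setting $\mathbb{E}_\tau[A^{\mathrm{real}}]=0$, but group-normalized proxies also have zero conditional mean, so the between-state terms could vanish. We would nevertheless state the result for expected conditional variance only, noting in a remark that it extends to total variance when both conditional means are zero.
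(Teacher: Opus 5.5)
Your proof is correct and follows the paper's argument step for step: you expand the variance bilinearly, minimize the strictly convex quadratic in $\alpha$, substitute $\alpha=1$ to get $\Var_\tau[Y-C]-\Var_\tau[Y]=\Var_\tau[C]-2\Cov_\tau[Y,C]$, and average the conditional variances over $s$; your Step 4 makes explicit two things the paper leaves implicit, namely the finiteness assumption and the fact that an a.e.\ strictly positive function has strictly positive integral. One correction to your closing remark: the group-normalized proxy of \Cref{eq:group_adv} has zero mean under uniform weighting over the $G$ candidates, not under $\tau\sim\pi_\theta(\cdot\mid s)$, so in CRAFT's instantiation the between-state term for $Y-C$ need not vanish, and restricting the claim to expected conditional variance (as both you and the paper do) is the right choice.
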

\noindent The proof is in \Cref{app:proof_variance_reduction}. Importantly, the proxy need not be unbiased, since residual variance reduction only requires the proxy to be sufficiently aligned with the real advantage.

CRAFT instantiates this decomposition with the surrogate objective:
\begin{equation}
\max_{\pi_\theta}\;
\underbrace{
\mathbb{E}_{s \sim d_{\mathrm{real}}^{\pi_\theta}}
\Bigl[\mathcal{R}_{\mathrm{cp}}(s;\pi_\theta)
\;\vphantom{\sum_t r_t^{\mathrm{gr}}}\Bigr]
}_{\text{\small Dense Counterfactual Proxy}}
+
\underbrace{
\mathbb{E}_{\xi \sim (\pi_\theta,\kappa,P_{\mathrm{real}})}
\Bigl[\sum_{t}\gamma^t r_t^{\mathrm{gr}}\Bigr]
}_{\text{\small Grounded Residual Correction}},
\label{eq:proxy_residual_objective}
\end{equation}
where the two objective components are defined in \Cref{subsec:cp,subsec:gr}.

\subsection{Dense Counterfactual Proxy}
\label{subsec:cp}

For each on-policy state $s_i$ collected from closed-loop interaction, the counterfactual world evaluates a group of $G$ candidate trajectories $\{\tau_g\}_{g=1}^G$ over a finite virtual horizon. Evaluation criteria covering efficiency, safety, and rule compliance yield a scalar return $\smash{\tilde R_{i,g}}$ for each candidate. Implementation details of the counterfactual world and reward design are provided in \Cref{app:counterfactual_engine,app:reward_design}. We convert these returns into local relative scores by normalizing within each candidate group:
\begin{equation}
\tilde{A}_{i,g}
=
\frac{\tilde{R}_{i,g}-\mu_i}{\sigma_i+\varepsilon_{\mathrm{norm}}},
\qquad
\mu_i=\frac{1}{G}\sum_{g=1}^{G}\tilde{R}_{i,g},
\label{eq:group_adv}
\end{equation}
where $\sigma_i$ is the empirical standard deviation of $\{\tilde R_{i,g}\}_{g=1}^G$. In CRAFT, this normalized score is the proxy value $\smash{\Phi_{\pi_\theta}(s_i,\tau_g)=\tilde A_{i,g}}$. Since fine-tuning is performed in a discrete action space, the counterfactual objective takes the exact expectation over candidate trajectories:
\begin{equation}
\mathcal{L}_{\mathrm{cp}}
=
-\frac{1}{B}\sum_{i=1}^{B}\sum_{g=1}^{G}
\pi_\theta(\tau_g \mid s_i)\,\tilde{A}_{i,g}.
\label{eq:lcp}
\end{equation}
This objective instantiates the dense counterfactual proxy term in \Cref{eq:proxy_residual_objective}, where $\mathcal{R}_{\mathrm{cp}}(s_i;\pi_\theta)=\sum_{g=1}^{G}\pi_\theta(\tau_g\mid s_i)\tilde A_{i,g}$. It provides dense group-wise comparative supervision at each on-policy state by evaluating candidate trajectories in a counterfactual world designed to approximate the real closed-loop world. However, unrealistic traffic interactions and dynamics biases still introduce a discrepancy between the proxy advantage $\tilde A_{i,g}$ and the real advantage $A^{\mathrm{real}}_{\pi_\theta}(s_i,\tau_g)$, which motivates the grounded residual correction.

\subsection{Grounded Residual Correction}
\label{subsec:gr}

Rather than estimating the exact residual in \Cref{eq:exact_decomposition}, CRAFT approximates the interaction-critical residual component using grounded feedback from traffic events that are not fully captured by the counterfactual proxy, such as collisions and traffic-rule violations. For a selected trajectory $\tau_t$, the controller executes $a_t=\kappa(s_t,\tau_t)$, after which the event-driven corrective reward defined in \Cref{app:reward_design} is computed from the closed-loop transition and converted into bounded corrective advantages $\hat A^{\mathrm{gr}}_t$, as detailed in \Cref{app:objective_details}.

Let $\rho_t(\theta)=\pi_\theta(\tau_t\mid s_t)/\pi_{\theta_{\mathrm{old}}}(\tau_t\mid s_t)$ denote the importance sampling ratio. We first form
\begin{equation}
\ell_t(\theta)
=
\min\!\Big(
\rho_t(\theta)\hat A^{\mathrm{gr}}_t,\;
\mathrm{clip}(\rho_t(\theta), 1-\epsilon_{\mathrm{clip}}, 1+\epsilon_{\mathrm{clip}})\hat A^{\mathrm{gr}}_t
\Big),
\label{eq:lt}
\end{equation}
and then apply the dual-clipping~\cite{ye2020dual_clip, gao2021learning} with constant $c>1$ to form the grounded residual loss
\begin{equation}
\mathcal{L}_{\mathrm{gr}}
=
-\mathbb{E}_{t\in\mathcal{B}}
\Big[
\max\!\big(\ell_t(\theta), c\hat A^{\mathrm{gr}}_t\big)\mathbf{1}_{\hat A^{\mathrm{gr}}_t<0}
\,+\,
\ell_t(\theta)\mathbf{1}_{\hat A^{\mathrm{gr}}_t\ge 0}
\Big].
\label{eq:lgr}
\end{equation}
Proposition~\ref{prop:dualclip} formalizes the stabilization of dual clipping, which preserves informative gradients from rare catastrophic rollouts while preventing extreme negative events from dominating the update.

To characterize the grounded residual correction under the decomposition in \Cref{eq:exact_decomposition}, let $\widehat{\Delta}_{\pi_\theta}(s,\tau)$ be the residual approximation induced by grounded feedback, and define
\begin{equation}
\widehat g(\theta)
=
\mathbb{E}
\Big[
\nabla_\theta\log\pi_\theta(\tau\mid s)\,
\big(\Phi_{\pi_\theta}(s,\tau)+\widehat{\Delta}_{\pi_\theta}(s,\tau)\big)
\Big].
\label{eq:ghat}
\end{equation}

\begin{theorem}[Gradient Bias under Residual Approximation]
\label{thm:bias_compensation}
Let $g(\theta)$ denote the real gradient in \Cref{eq:exact_decomposition}. If $\mathbb{E}\|\nabla_\theta\log\pi_\theta(\tau\mid s)\|_2^2<\infty$ and $\mathbb{E}|\widehat{\Delta}_{\pi_\theta}(s,\tau)-\Delta_{\pi_\theta}(s,\tau)|^2<\infty$, then
\begin{equation}
\|\widehat g(\theta)-g(\theta)\|_2
\le
\Big(
\mathbb{E}\|\nabla_\theta\log\pi_\theta(\tau\mid s)\|_2^2
\Big)^{1/2}
\Big(
\mathbb{E}|\widehat{\Delta}_{\pi_\theta}(s,\tau)-\Delta_{\pi_\theta}(s,\tau)|^2
\Big)^{1/2}.
\label{eq:bias_bound_residual}
\end{equation}
If $\|\nabla_\theta\log\pi_\theta(\tau\mid s)\|_2\le G_{\nabla}$ almost surely, the first factor reduces to $G_{\nabla}$. Proof in \Cref{app:proof_bias_compensation}.
\end{theorem}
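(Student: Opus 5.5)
The plan is to write the gradient error as a single expectation and then apply Cauchy--Schwarz. By Proposition~\ref{prop:exact_decomposition}, the real gradient is
\begin{equation*}
g(\theta)=\mathbb{E}\big[\nabla_\theta\log\pi_\theta(\tau\mid s)\,(\Phi_{\pi_\theta}(s,\tau)+\Delta_{\pi_\theta}(s,\tau))\big],
\end{equation*}
with expectations over $s\sim d_{\mathrm{real}}^{\pi_\theta}$ and $\tau\sim\pi_\theta(\cdot\mid s)$. The surrogate $\widehat g(\theta)$ in \Cref{eq:ghat} uses the same measure, so subtracting the two cancels the proxy term:
\begin{equation*}
\widehat g(\theta)-g(\theta)=\mathbb{E}\big[\nabla_\theta\log\pi_\theta(\tau\mid s)\,e(s,\tau)\big],\qquad e\triangleq\widehat{\Delta}_{\pi_\theta}-\Delta_{\pi_\theta}.
\end{equation*}
This step needs a little care. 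The proxy term need not be separately integrable when it is weighted by the score, so it is cleaner to define the difference directly as the expectation of $\nabla_\theta\log\pi_\theta\cdot e$. That expectation is finite by the Cauchy--Schwarz bound below, and linearity then justifies the subtraction whenever both $g$ and $\widehat g$ exist.

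Next, the norm of the vector-valued expectation is bounded by the expectation of the norm. This is Jensen's inequality for the convex function $\|\cdot\|_2$, or equivalently the triangle inequality for Bochner integrals. It gives
\begin{equation*}
\|\widehat g(\theta)-g(\theta)\|_2\le\mathbb{E}\big[\|\nabla_\theta\log\pi_\theta(\tau\mid s)\|_2\,|e(s,\tau)|\big].
\end{equation*}
Applying the scalar Cauchy--Schwarz inequality on $L^2$ of the joint law of $(s,\tau)$ to the product $\|\nabla_\theta\log\pi_\theta\|_2\cdot|e|$ then yields \Cref{eq:bias_bound_residual}. The two finiteness hypotheses ensure that both factors are finite, and hence that the product is integrable. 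This integrability is exactly what is needed to make the first step rigorous.

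An alternative that avoids Bochner integrals is to fix any unit vector $u$. We have $u^\top(\widehat g-g)=\mathbb{E}[(u^\top\nabla_\theta\log\pi_\theta)\,e]$. Cauchy--Schwarz bounds this by $(\mathbb{E}(u^\top\nabla_\theta\log\pi_\theta)^2)^{1/2}(\mathbb{E}e^2)^{1/2}$, and the first factor is at most $(\mathbb{E}\|\nabla_\theta\log\pi_\theta\|_2^2)^{1/2}$. Taking the supremum over $u$ gives the same bound.

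For the bounded-score case, $\|\nabla_\theta\log\pi_\theta\|_2\le G_\nabla$ almost surely implies $\mathbb{E}\|\nabla_\theta\log\pi_\theta\|_2^2\le G_\nabla^2$, so the first factor is at most $G_\nabla$.

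I expect no real obstacle. The only delicate point is the measure-theoretic justification that the $\Phi$ terms cancel, that is, that both gradients are taken under the identical on-policy measure $d_{\mathrm{real}}^{\pi_\theta}\otimes\pi_\theta$. Proposition~\ref{prop:exact_decomposition} and the definition \Cref{eq:ghat} guarantee this.
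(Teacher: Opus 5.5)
Your proposal is correct and follows the paper's proof essentially step for step: you subtract under the common measure $d_{\mathrm{real}}^{\pi_\theta}\otimes\pi_\theta$ so that the proxy term cancels, bound the norm of the vector expectation by the expectation of the norm (Jensen), apply scalar Cauchy--Schwarz, and read off the bounded-score case. Your unit-vector variant is also valid; before you relax it to the stated bound, it gives the slightly sharper factor $\sup_{\|u\|_2=1}(\mathbb{E}(u^\top\nabla_\theta\log\pi_\theta)^2)^{1/2}$, which is the square root of the largest eigenvalue of the Fisher matrix.
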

\noindent \Cref{thm:bias_compensation} establishes the role of grounded residual correction in \Cref{eq:proxy_residual_objective}. Grounded feedback captures the dominant mismatch left by the dense counterfactual proxy, focusing correction on the interaction-sensitive part of the real closed-loop objective.

\subsection{On-Policy Self-Distillation}
\label{subsec:distill}

The proxy-residual objective provides dense and grounded learning signals, but unconstrained policy updates may deviate from reliable pre-trained behavior. CRAFT limits this drift through on-policy self-distillation from an EMA teacher $\pi_T$,
\begin{equation}
\theta_T^{(k+1)} \leftarrow m \theta_T^{(k)} + (1-m)\theta^{(k)},\qquad
\theta_T^{(0)}=\theta_{\mathrm{pre}},\quad 0\le m<1.
\label{eq:ema}
\end{equation}
The teacher is initialized from the pre-trained checkpoint, as a conservative anchor early in training, and subsequently tracks the smoothed online policy through EMA. For a visited state $s$, let $U(s,\tau)=\Phi_{\pi_\theta}(s,\tau)+\widehat{\Delta}_{\pi_\theta}(s,\tau)$ denote the proxy-residual advantage estimate. Proposition~\ref{prop:kl_proximal} indicates that policy improvement constrained by reverse KL induces an exponential reweighting of the teacher distribution according to this advantage estimate. This provides a principled update rule that improves the policy under the proxy-residual objective while remaining close to teacher-supported behavior. CRAFT implements this rule with asymmetric KL penalties,
\begin{equation}
\mathcal{L}_{\mathrm{dist}}
=
\mathbb{E}_{s\sim d_{\mathrm{real}}^{\pi_\theta},\,\tau\sim \pi_\theta(\cdot\mid s)}
\left[
\log \frac{\pi_\theta(\tau\mid s)}{\pi_T(\tau\mid s)}
\right], 
\quad
\mathcal{L}_{\mathrm{KL}}
=
\mathbb{E}_{s\sim d_{\mathrm{real}}^{\pi_\theta},\,\tau\sim \pi_T(\cdot\mid s)}
\left[
\log \frac{\pi_T(\tau\mid s)}{\pi_\theta(\tau\mid s)}
\right].
\label{eq:kls}
\end{equation}
The reverse KL term is the primary constraint, with $\beta_r>\beta_f$, and suppresses deviations from reliable pre-trained behavior. The weaker forward KL term preserves teacher-supported alternatives and mitigates collapse to a narrow subset of trajectories favored by the proxy-residual objective.

Overall, CRAFT combines the dense counterfactual proxy, grounded residual correction, and asymmetric self-distillation into the final objective, with Algorithm~\ref{alg:craft} detailing the full training procedure:
\begin{equation}
\mathcal{L}_{\mathrm{all}}
=
\lambda_{\mathrm{cp}}\mathcal{L}_{\mathrm{cp}}
\,+\,
\lambda_{\mathrm{gr}}\mathcal{L}_{\mathrm{gr}}
\,+\,
\beta_r\mathcal{L}_{\mathrm{dist}}
\,+\,
\beta_f\mathcal{L}_{\mathrm{KL}}.
\label{eq:overall_loss}
\end{equation}

%% file: tex/4_experiment.tex
\section{Experiments}
\label{sec:exp}

\subsection{Experimental Setup}

\paragraph{Benchmark}
We conduct experiments on Bench2Drive~\cite{jia2024bench2drive}, a CARLA-based benchmark~\cite{carla} for closed-loop evaluation in autonomous driving. Our setup follows Bench2Drive as the primary protocol, with additional scenario configurations from LEAD~\cite{Nguyen2026lead} and Longest6 V2~\cite{jaeger2025carl}. We also apply several protocol-level refinements to the original scenarios and evaluation protocol to improve efficiency and reduce repeated-run variance. Details are provided in \Cref{app:bench2drive_patch}.
\vspace{-10pt}
\paragraph{Metrics} We adopt the standard Bench2Drive metrics, including Driving Score (DS), Success Rate (SR), and Multi-Ability. DS combines Route Completion (RC) and Infraction Score (IS), where RC measures route progress and IS reflects safety and traffic-rule compliance. SR measures successful scenario completion, while Multi-Ability separately evaluates five advanced urban driving skills.
\vspace{-10pt}
\paragraph{Driving Policies and Fine-tuning Baselines} To demonstrate the generality of CRAFT, we select three representative driving policies from distinct paradigms.
\vspace{-2pt}
\begin{itemize}
\item \textbf{Hierarchical planning.} HiP-AD~\cite{tang2025hipad} uses multi-granularity planning queries together with deformable attention to couple perception and planning within a unified decoder.
\item \textbf{Vision-language-action.} MindDrive~\cite{fu2025minddrive} uses a VLA architecture with separate decision and action experts, translating high-level semantic decisions into low-level trajectory generation.
\item \textbf{Vocabulary scoring.} SparseDriveV2~\cite{sun2026sparsedrivev2} uses sparse scene representation for end-to-end driving, and here we use its vocabulary variant to provide a dense candidate-trajectory space for RL. This property also makes it the default driving policy in our subsequent analysis experiments.
\end{itemize}
We compare CRAFT against the pre-trained policy and three RL fine-tuning baselines under the same setting and evaluation protocol. The baselines differ only in their training objective.
\vspace{-2pt}
\begin{itemize}
\item PPO~\cite{PPO} employs a clipped surrogate objective together with a learned value baseline, making it the standard actor--critic baseline for stable on-policy fine-tuning.
\item REINFORCE++~\cite{hu2025reinforce++} removes the learned critic and uses return-based updates normalized at the batch level, providing a simple critic-free baseline for on-policy fine-tuning.
\item GRPO~\cite{shao2024grpo} replaces a separate critic with relative rewards computed within output groups, making it the closest baseline to the group-based counterfactual update used in CRAFT.
\end{itemize}
\vspace{-10pt}
\paragraph{Implementation Details}
For all policies, we fine-tune only the planning-related modules while freezing pre-trained visual processing. All fine-tuning methods share a CaRL-style reward formulation based on progress and infractions~\cite{jaeger2025carl}. Our RL infrastructure is built on RIFT~\cite{chen2025rift} and supports asynchronous rollout collection and policy updates. Details appear in \Cref{app:trainable_modules,app:counterfactual_engine,app:reward_design,app:objective_details}.
\vspace{-10pt}
\subsection{Main Closed-Loop Results}

\input{tab/main_result}

\begin{figure}[t]
    \centering
    \includegraphics[width=\textwidth]{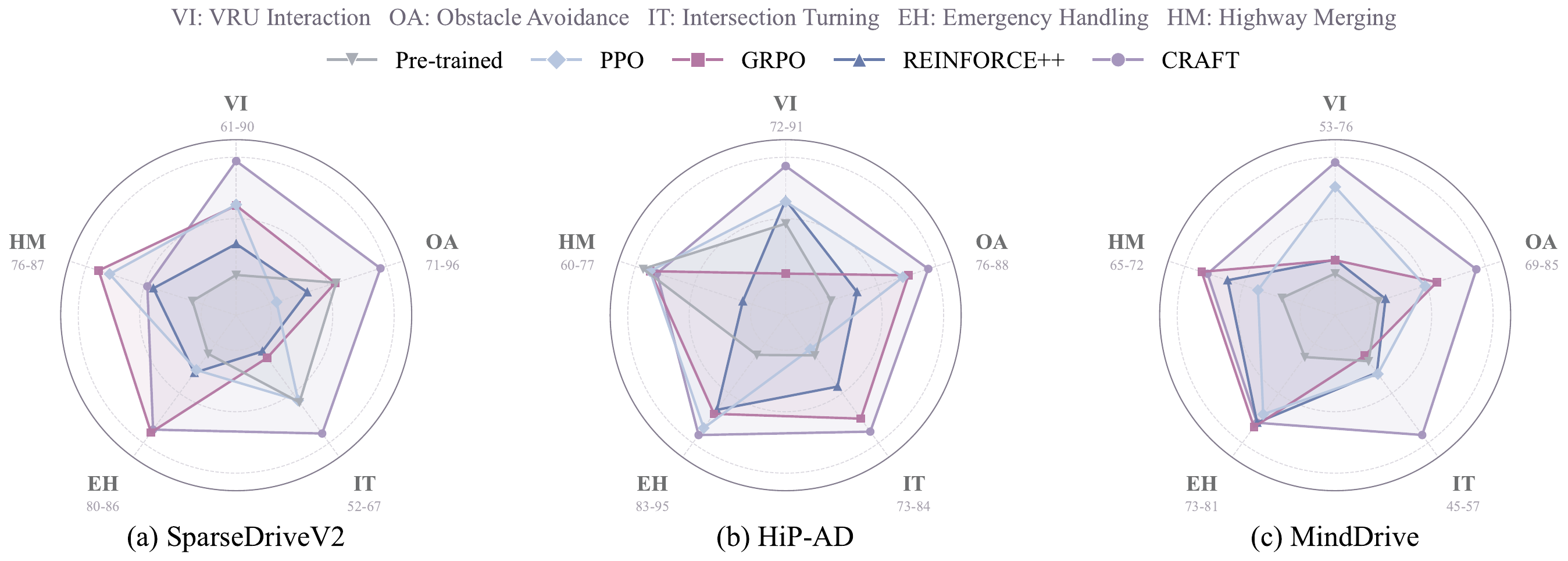}
    \caption{\small \textbf{Fine-grained ability profiles.} Radar plots compare fine-tuning methods across driving policies. Each axis represents a scenario-specific capability, and higher values indicate better performance.}
    \label{fig:ego_ability_radar}
\end{figure}

The main comparison reveals a consistent pattern across policy architectures. CRAFT improves closed-loop metrics without relying on any specific model architecture. This improvement remains evident even for HiP-AD, where the pre-trained policy is already strong, and is larger for MindDrive and SparseDriveV2, where the initial closed-loop performance indicates greater potential for fine-tuning. In \Cref{tab:main_results}, the same ordering holds across hierarchical planning, vision-language-action, and vocabulary-scoring, supporting the claim that CRAFT generalizes across diverse policy architectures.

The comparison also highlights the value of the full CRAFT objective. Even GRPO, the closest proxy-based baseline, underperforms CRAFT across policy architectures. PPO and REINFORCE++ improve specific driving abilities, but their gains are less consistent. This pattern further supports the proxy-residual design of CRAFT, suggesting that dense counterfactual supervision is most effective when refined by grounded residual feedback.

The ability profiles in \Cref{fig:ego_ability_radar} provide a finer-grained view of the aggregate results. CRAFT leads on most abilities, with the largest gains in safety-critical behaviors such as VRU Interaction, Obstacle Avoidance, and Intersection Turning. These abilities depend more strongly on closed-loop interaction and are therefore less likely to be captured by a dense counterfactual proxy alone. This pattern further supports the proxy-residual design of CRAFT, suggesting that dense counterfactual supervision alone is insufficient for safety-critical events and that grounded residual correction is essential.

\input{tab/ablation}

\begin{figure}[t]
    \centering

    \begin{minipage}{\textwidth}
        \centering
        \includegraphics[width=\textwidth]{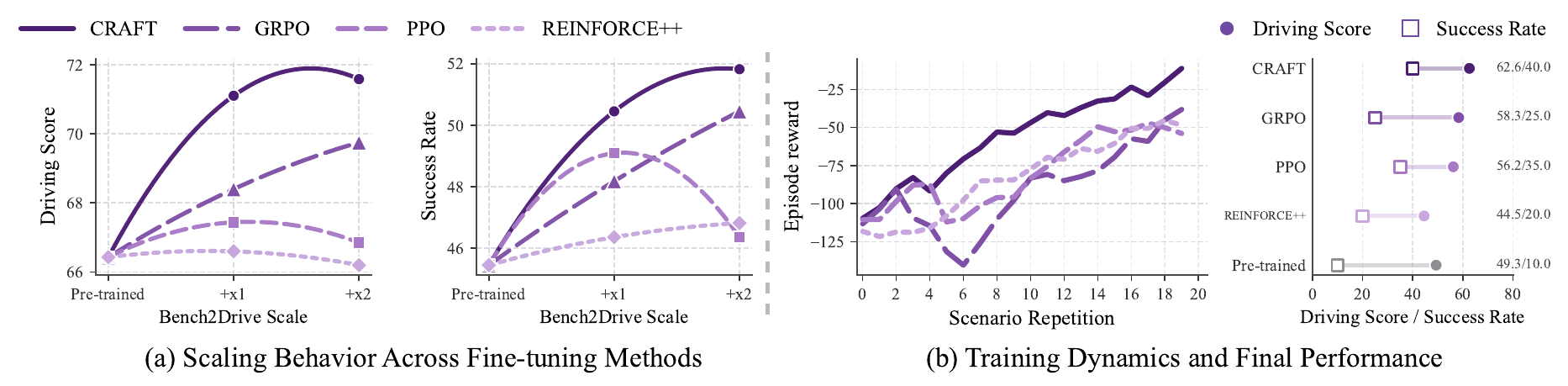}
        \caption{\small \textbf{Scaling behavior and reward dynamics.}
        (a) Closed-loop performance as the Bench2Drive scale increases.
        (b) Reward evolution during RL training on a challenging scenario, with final evaluation performance.}
        \label{fig:scaling}
    \end{minipage}
    \begin{minipage}{\textwidth}
        \centering
        \includegraphics[width=\textwidth]{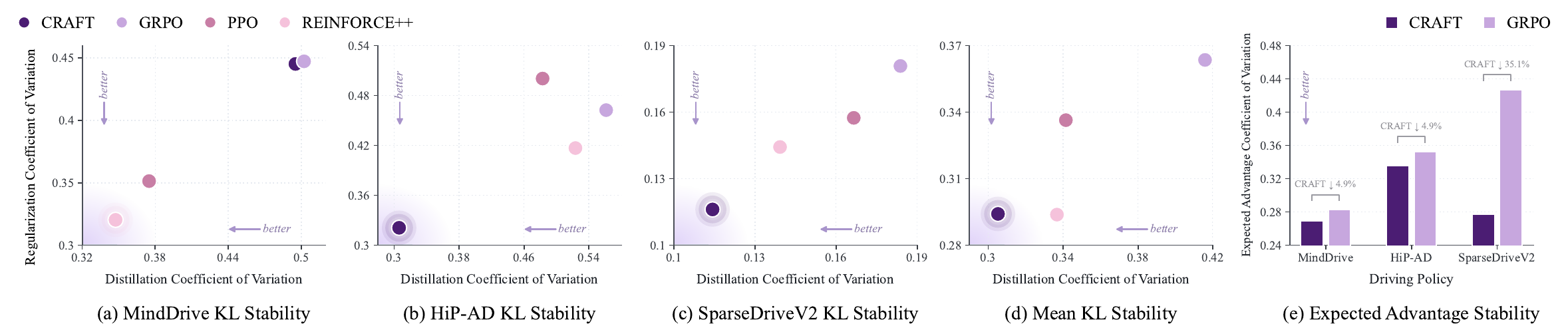}
        \caption{\small \textbf{Training stability.}
        Panels (a)--(d) report the coefficients of variation for KL terms across fine-tuning methods,
        and panel (e) reports the coefficient of variation for the expected advantage across driving policies.}
        \label{fig:train_stability}
    \end{minipage}

\end{figure}

\subsection{Ablation and Generalization Analysis}

The ablation study in \Cref{tab:craft_ablation} clarifies each component of CRAFT. Self-distillation reinforces the dominant modes of the pre-trained policy, while the dense counterfactual proxy drives broad closed-loop gains. With grounded residual correction, the full CRAFT objective achieves the best overall performance, especially on the interaction-related metrics IS and SR. These results support the proxy-residual view, where the dense counterfactual proxy provides the primary optimization signal and grounded residual correction compensates for counterfactual bias through closed-loop interaction.

As shown in \Cref{tab:transfer}, CRAFT maintains positive generalization under scenario shift. In-domain fine-tuning on Bench2Drive delivers the largest improvement over the pre-trained policy, while cross-scenario transfer still yields consistent gains. The main limitation appears in Route Completion, where the limited improvement under transfer suggests a more conservative policy in unseen scenarios. Overall, CRAFT generalizes beyond the training split, although the magnitude of improvement remains partly dependent on the interaction distribution used for fine-tuning.

\subsection{Scaling and Training Dynamics}

\Cref{fig:scaling} reveals a clear scaling pattern. CRAFT is the only method whose performance improves consistently with more data. PPO and REINFORCE++, which rely only on real interaction, become less stable at larger scales and show saturating gains, suggesting inefficient learning from sparse closed-loop feedback. GRPO, which depends only on proxy supervision, delivers limited improvement and even deteriorates at larger scales, indicating that proxy bias may become the main bottleneck in later stages of training. These trends demonstrate that efficient scaling requires dense proxy supervision, while sustained gains depend on grounded residual correction.

The results in \Cref{fig:train_stability} further confirm the stability of CRAFT. In terms of KL stability, CRAFT shows larger gains for policies with broader action spaces, such as HiP-AD and SparseDriveV2, while the advantage is smaller for MindDrive, which has a more constrained action space. This pattern suggests that a broader action space improves counterfactual proxy stability. The expected-advantage results support the same conclusion from the grounded side. Compared with GRPO, CRAFT reduces expected-advantage variation across policy architectures. Although grounded residual correction is driven by sparse feedback, dual clipping stabilizes training. Together, these results suggest that the stability of CRAFT arises from both components, with broader action spaces improving proxy stability and dual-clipped grounded residual correction further reducing update variance.

\subsection{Qualitative Scenario Analysis}

\begin{figure}[t]
    \centering
    \includegraphics[width=\textwidth]{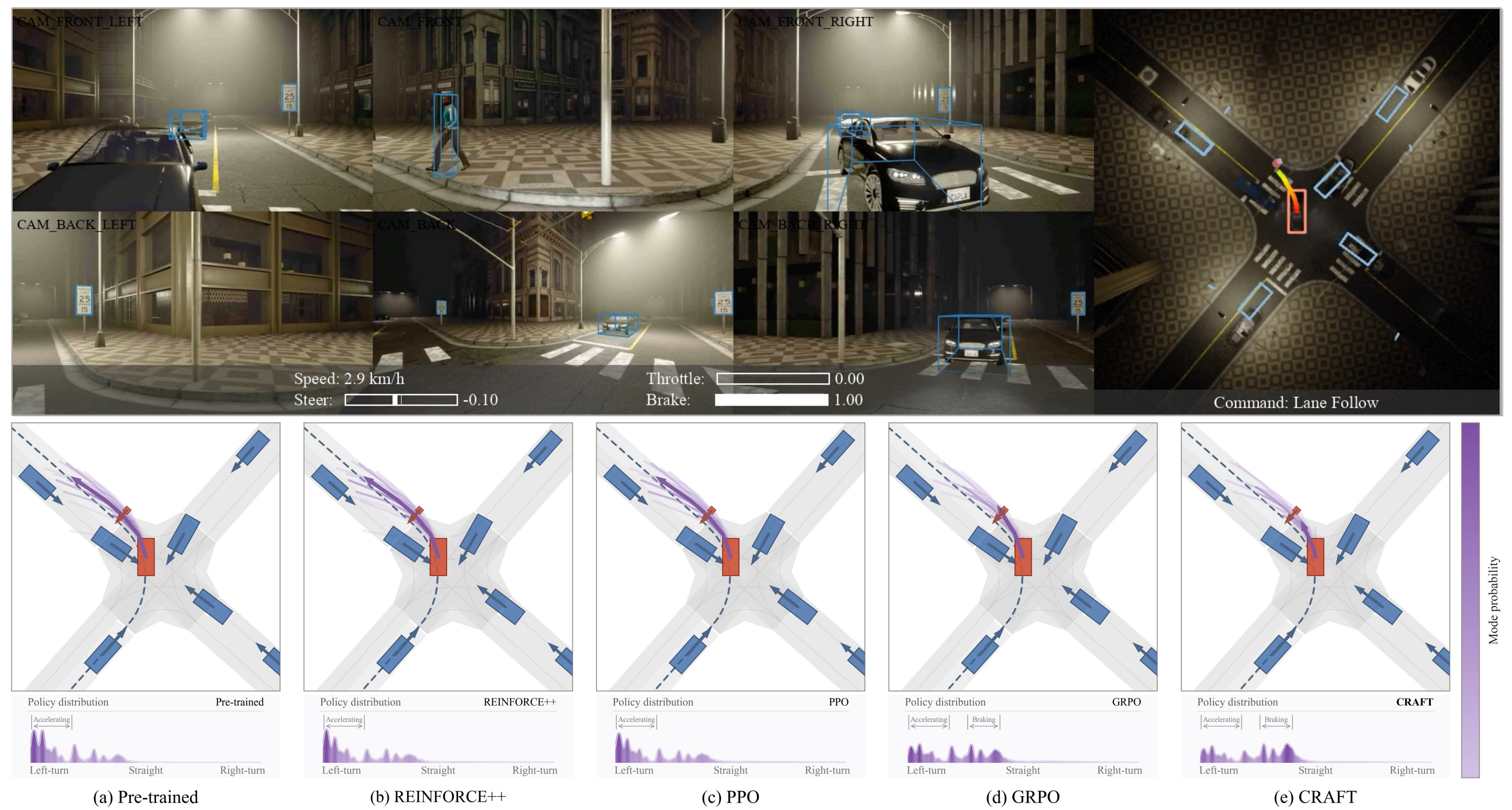}
    \caption{\small \textbf{Policy distribution shift in a pedestrian-crossing scenario.} Top: a closed-loop scene snapshot with SparseDriveV2 as the ego vehicle. Bottom: the policy distributions over the trajectory vocabulary for the pre-trained and fine-tuned policies. CRAFT shifts probability mass toward braking-compatible modes.}
    \label{fig:scenario}
\end{figure}

\Cref{fig:scenario} illustrates how fine-tuning reshapes the policy distribution in a safety-critical state. The pre-trained SparseDriveV2 assigns most probability mass to a left-turn accelerating mode, which remains largely unchanged after PPO and REINFORCE++ fine-tuning. This similarity suggests that sparse closed-loop updates do not efficiently shift the dominant mode away from unsafe behavior. GRPO shifts part of the distribution toward a braking-compatible mode, but this mode remains secondary, indicating that the counterfactual proxy alone gives a biased estimate of the safer closed-loop response. CRAFT induces the clearest shift, making the braking-compatible mode dominant while retaining the original peak of the pre-trained policy at lower probability. These results highlight the advantage of proxy-residual fine-tuning for safety-critical events by shifting probability toward interaction-grounded corrective behavior while preserving the pre-trained policy manifold.

%% file: tab/main_result.tex
\begin{table}[t]
\centering
\caption{\small \textbf{Closed-loop performance on Bench2Drive.}  
${\dagger}$ denotes the vocabulary-based variant of SparseDriveV2.
Best results are highlighted in \textbf{bold}, and second-best results are \underline{underlined}.
}
\label{tab:main_results}
\resizebox{1.0\textwidth}{!}{
\begin{threeparttable}
\tablestyle{2.0pt}{1.1}
\setlength{\tabcolsep}{1mm}{
\newcolumntype{Y}{>{\centering\arraybackslash}m{13mm}}
\begin{tabular}{c!{\color{black}\vrule width 0.3pt}cYY!{\color{black}\vrule width 0.3pt}cccccc}
    \toprule
     \multicolumn{1}{c}{\multirow{2}{*}{\textbf{Driving Policy}}} & \multicolumn{1}{c}{\multirow{2}{*}{\textbf{Method}}} & \multicolumn{2}{c}{\textbf{Closed-loop Metrics}} & \multicolumn{6}{c}{\textbf{Multi-Ability} (\%)} \\
     \cmidrule(r){3-4}
     \cmidrule(r){5-10}
     \multicolumn{1}{c}{} & \multicolumn{1}{c}{} & DS $\uparrow$ & \multicolumn{1}{Y}{SR (\%) $\uparrow$} & Merging $\uparrow$ & Overtaking $\uparrow$ & Emergency Brake $\uparrow$ & Give Way $\uparrow$ & \multicolumn{1}{c}{Traffic Sign $\uparrow$} & Mean $\uparrow$ \\
    \midrule
    \multirow{5}{*}{
    \begin{tabular}[c]{@{}c@{}}
    \textbf{HiP-AD}~\cite{tang2025hipad}\\
    {\scriptsize\textcolor{darkgray}{\textit{(Hierarchical Planning)}}}
    \end{tabular}} 
    & Pre-trained & 73.46 & 53.64 & \textbf{46.30} & 53.30 & 63.30 & \textbf{50.00} & \secondbest{46.30} & 51.80\\
    & REINFORCE++  & 74.82 & 55.00 & 36.20 & \textbf{64.40} & \secondbest{71.70} & \textbf{50.00} & 44.20 & \secondbest{53.30}\\
    & PPO & 73.58 & 53.18 & 41.20 & 55.60 & 68.30 & 30.00 & 43.20 & 47.70\\
    & GRPO & \secondbest{75.27} & \secondbest{55.91} & \secondbest{42.50} & \secondbest{60.00} & 70.00 & \secondbest{40.00} & \textbf{48.40} & 52.20\\
    & \cellcolor{gray!30}CRAFT (Ours) & \cellcolor{gray!30}\textbf{75.79} & \cellcolor{gray!30}\textbf{56.36} & \cellcolor{gray!30}40.00 & \cellcolor{gray!30}\textbf{64.40} & \cellcolor{gray!30}\textbf{73.30} & \cellcolor{gray!30}\textbf{50.00} & \cellcolor{gray!30}\textbf{48.40} & \cellcolor{gray!30}\textbf{55.20}\\
    \midrule
    \multirow{5}{*}{
    \begin{tabular}[c]{@{}c@{}}
    \textbf{MindDrive}~\cite{fu2025minddrive}\\
    {\scriptsize\textcolor{darkgray}{\textit{(Vision Language Action)}}}\end{tabular}}
    & Pre-trained & 62.60 & 40.45 & 26.30 & \secondbest{51.10} & 50.00 & \secondbest{40.00} & 32.60 & 40.00\\
    & REINFORCE++  & 63.68 & 41.36 & \secondbest{32.50} & 46.70 & 50.00 & \textbf{50.00} & 33.70 & 42.60\\
    & PPO & 64.61 & 42.73 & 30.00 & 48.90 & \secondbest{55.00} & \textbf{50.00} & \secondbest{35.80} & 43.90\\
    & GRPO & \secondbest{65.12} & \secondbest{43.18} & 27.50 & \textbf{57.80} & 53.30 & \textbf{50.00} & 34.70 & \secondbest{44.70}\\
    & \cellcolor{gray!30}CRAFT (Ours) & \cellcolor{gray!30}\textbf{67.45} & \cellcolor{gray!30}\textbf{46.36} & \cellcolor{gray!30}\textbf{37.50} & \cellcolor{gray!30}\secondbest{51.10} & \cellcolor{gray!30}\textbf{56.70} & \cellcolor{gray!30}\textbf{50.00} & \cellcolor{gray!30}\textbf{37.90} & \cellcolor{gray!30}\textbf{46.60}\\
    \midrule
    \multirow{5}{*}{
    \begin{tabular}[c]{@{}c@{}}
    \textbf{SparseDriveV2}$^\dagger$~\cite{sun2026sparsedrivev2}\\
    {\scriptsize\textcolor{darkgray}{\textit{(Vocabulary Scoring)}}}
    \end{tabular}} 
    & Pre-trained & 66.43 & 45.45 & \textbf{43.80} & 40.00 & 56.70 & \textbf{50.00} & 32.60 & 44.60\\
    & REINFORCE++  & 66.60 & 46.36 & \secondbest{41.20} & \textbf{53.30} & 55.00 & \textbf{50.00} & 26.30 & 45.20\\
    & PPO & 67.44 & \secondbest{49.09} & \textbf{43.80} & \textbf{53.30} & \secondbest{61.70} & \textbf{50.00} & 33.70 & \secondbest{48.50}\\
    & GRPO & \secondbest{68.40} & \secondbest{48.18} & \secondbest{41.20} & 48.90 & \secondbest{61.70} & \textbf{50.00} & \secondbest{36.80} & 47.70\\
    & \cellcolor{gray!30}CRAFT (Ours) & \cellcolor{gray!30}\textbf{71.10} & \cellcolor{gray!30}\textbf{50.45} & \cellcolor{gray!30}40.00 & \cellcolor{gray!30}\textbf{53.30} & \cellcolor{gray!30}\textbf{68.30} & \cellcolor{gray!30}\textbf{50.00} & \cellcolor{gray!30}\textbf{37.90} & \cellcolor{gray!30}\textbf{49.90}\\
    \bottomrule
\end{tabular}
}
\end{threeparttable}
}
\end{table}

%% file: tab/ablation.tex
\begin{table*}[t]
\centering

\begin{minipage}[t]{0.42\textwidth}
\centering

\renewcommand\arraystretch{1.05}
\setlength{\tabcolsep}{4pt}
\small
\captionof{table}{\textbf{Ablation study of CRAFT.} \texttt{SD}: self-distillation; \texttt{CP}: counterfactual proxy; \texttt{GR}: grounded residual.}
\label{tab:craft_ablation}
\resizebox{\linewidth}{!}{
\begin{tabular}{@{}c c c c c c c c@{}}
\toprule
\multirow{2}{*}{ID} & \multirow{2}{*}{SD} & \multirow{2}{*}{CP} & \multirow{2}{*}{GR} & \multicolumn{4}{c}{Closed-loop Metrics}\\
\cmidrule(lr){5-8}
& & & & DS $\uparrow$& RC $\uparrow$& IS $\uparrow$& SR $\uparrow$\\
\midrule
\rowcolor{gray!0}
1 & \xmark & \xmark & \xmark & 66.43 & 77.53 & 0.80 & 45.45\\
\rowcolor{gray!10}
2 & \cmark & \xmark & \xmark & 66.49 & 76.65 & 0.81 & 46.36\\
\rowcolor{gray!20}
3 & \cmark & \cmark & \xmark & 68.40 & 78.28 & 0.82 & 48.18\\
\rowcolor{gray!35}
4 & \cmark & \cmark & \cmark & \textbf{71.10} & \textbf{80.01} & \textbf{0.84} & \textbf{50.45}\\
\bottomrule
\end{tabular}
}
\end{minipage}
\hfill
\begin{minipage}[t]{0.56\textwidth}
\centering
\renewcommand\arraystretch{1.1}
\setlength{\tabcolsep}{2.8pt}
\small
\captionof{table}{\textbf{Generalization across datasets.} 
Gains over the pre-trained policy under each train-test split. 
${\ddagger}$: Town12 subset of LEAD scenarios.}
\label{tab:transfer}
\resizebox{\linewidth}{!}{
\begin{tabular}{@{}c c c c c c@{}}
\toprule
\multirow{2}{*}{Train} & \multirow{2}{*}{Test} & \multicolumn{4}{c}{Closed-loop Metrics}\\
\cmidrule(lr){3-6}
& & $\Delta$ DS $\uparrow$& $\Delta$ RC $\uparrow$& $\Delta$ IS $\uparrow$& $\Delta$ SR $\uparrow$\\
\midrule
Bench2Drive & Bench2Drive & {\textbf{+4.67}} & {\textbf{+2.48}} & {\textbf{+0.04}} & {\textbf{+5.00}}\\
\rowcolor{gray!30}
Bench2Drive & Longest6 v2~\cite{jaeger2025carl} & {+2.18} & {+1.12} & {+0.06} & {+0.00} \\
LEAD$^{\ddagger}$ & LEAD$^{\ddagger}$~\cite{Nguyen2026lead} & {\textbf{+2.14}} & {\textbf{+2.60}} & {\textbf{+0.01}} & {\textbf{+1.89}}\\
\rowcolor{gray!30}
LEAD$^{\ddagger}$ & Bench2Drive & {+0.58} & {-0.34} & {+0.01} & {+0.91}\\

\bottomrule
\end{tabular}
}
\end{minipage}
\end{table*}

%% file: tex/5_conclusions.tex
\section{Conclusions}
\label{sec:conclusion}

This paper presents CRAFT, a proxy-residual framework for closed-loop post-training of supervised driving policies. CRAFT combines a dense counterfactual proxy on on-policy visited states, grounded residual correction from executed rollouts, and self-distillation to preserve reliable pre-trained behavior during fine-tuning. We support this design with an exact gradient decomposition on the real closed-loop state distribution, together with variance, bias, and stability analyses. On Bench2Drive, CRAFT achieves the strongest closed-loop gains across hierarchical-planning, vision-language-action, and vocabulary-scoring architectures, while ablation and scaling results confirm the complementary roles of counterfactual proxy learning, grounded residual correction, and self-distillation.


%% file: tex/6_appendix.tex
\phantomsection
\section*{Appendix}

\section{Proofs}
\label{sec:theory}

This appendix provides the proofs for the formal statements in \Cref{sec:method}. Unless noted otherwise, expectations are taken over $s\sim d_{\mathrm{real}}^{\pi_\theta}$ and candidate trajectory $\tau\sim\pi_\theta(\cdot\mid s)$.

\subsection{Exact Decomposition}
\label{app:proof_exact_decomposition}

\begin{proof}
By definition of the residual mismatch,
\begin{equation}
A^{\mathrm{real}}_{\pi_\theta}(s,\tau)
=
\Phi_{\pi_\theta}(s,\tau)+\Delta_{\pi_\theta}(s,\tau).
\end{equation}
Substituting this identity into the real policy gradient in \Cref{eq:real_pg} gives
\begin{align}
\nabla_\theta J_{\mathrm{real}}(\pi_\theta)
&=
\mathbb{E}
\Big[
\nabla_\theta \log \pi_\theta(\tau\mid s)\,
\big(\Phi_{\pi_\theta}(s,\tau)+\Delta_{\pi_\theta}(s,\tau)\big)
\Big] \\
&=
\mathbb{E}
\Big[
\nabla_\theta \log \pi_\theta(\tau\mid s)\,
\Phi_{\pi_\theta}(s,\tau)
\Big]
+
\mathbb{E}
\Big[
\nabla_\theta \log \pi_\theta(\tau\mid s)\,
\Delta_{\pi_\theta}(s,\tau)
\Big],
\end{align}
which proves the claim.
\end{proof}

\subsection{KL-Proximal Policy Improvement}
\label{app:proof_kl_proximal}

\begin{proposition}[KL-Proximal Policy Improvement]
\label{prop:kl_proximal}
For a finite or countable discrete trajectory space, fix state $s$ and a strictly positive teacher distribution $q(\tau)=\pi_T(\tau\mid s)$. For any fixed bounded score $U(s,\tau)$ and temperature $\eta>0$, the non-parametric solution to
\begin{equation}
\max_{p\in\Delta(\mathcal{T})}\;
\mathbb{E}_{\tau\sim p}[U(s,\tau)]-\eta D_{\mathrm{KL}}(p\|q)
\end{equation}
is
\begin{equation}
p_\eta^\star(\tau\mid s)
=
\frac{
q(\tau)\exp(U(s,\tau)/\eta)
}{
\sum_{\tau'\in\mathcal{T}}q(\tau')\exp(U(s,\tau')/\eta)
}.
\label{eq:kl_proximal_solution}
\end{equation}
\end{proposition}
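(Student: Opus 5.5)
The plan is to rewrite the objective as a negative KL divergence to the candidate $p_\eta^\star$, up to an additive constant that does not depend on $p$. Gibbs' inequality then gives both the maximizer and its uniqueness.

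\textbf{Step 1: the normalizer is finite and positive.} Since $U(s,\cdot)$ is bounded, say $|U|\le M$, we have
\[
Z\triangleq\sum_{\tau'}q(\tau')\exp(U(s,\tau')/\eta)\in\big[e^{-M/\eta},e^{M/\eta}\big].
\]
This holds even for countable $\mathcal{T}$, because $q$ sums to one. So $p_\eta^\star$ is a well-defined probability distribution, and it is strictly positive because $q$ is.

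\textbf{Step 2: restrict the feasible set.} Take any $p\in\Delta(\mathcal{T})$. If $D_{\mathrm{KL}}(p\|q)=\infty$, the objective equals $-\infty$, so such $p$ can be discarded. Otherwise $\mathbb{E}_p[U]$ is finite by boundedness. Moreover $\log(p_\eta^\star/q)=U/\eta-\log Z$ is bounded, so every sum below converges absolutely.

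\textbf{Step 3: complete the KL.} Substituting $\log q=\log p_\eta^\star-U/\eta+\log Z$ into the KL term and expanding gives
\[
\mathbb{E}_{p}[U]-\eta D_{\mathrm{KL}}(p\|q)
=\mathbb{E}_p[U]-\eta\sum_\tau p(\tau)\Big[\log\tfrac{p(\tau)}{p_\eta^\star(\tau)}+\tfrac{U(s,\tau)}{\eta}-\log Z\Big]
=\eta\log Z-\eta D_{\mathrm{KL}}(p\|p_\eta^\star).
\]
The terms are split using finiteness of $D_{\mathrm{KL}}(p\|q)$ together with boundedness of $U$. This also shows that $D_{\mathrm{KL}}(p\|p_\eta^\star)$ is finite.

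\textbf{Step 4: conclude.} Gibbs' inequality gives $D_{\mathrm{KL}}(p\|p_\eta^\star)\ge 0$, with equality if and only if $p=p_\eta^\star$. Hence the maximum value is $\eta\log Z$, attained uniquely at \Cref{eq:kl_proximal_solution}.

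\textbf{Alternative for finite $\mathcal{T}$.} A Lagrangian argument also works. Stationarity gives $U-\eta(\log p-\log q+1)-\lambda=0$, hence $p\propto q e^{U/\eta}$. Strict concavity of the objective in $p$ then gives uniqueness. I prefer the completion argument because it also covers the countable case.

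\textbf{Main obstacle.} The only delicate point is the countable case: justifying the splitting of infinite sums in Step 3, and handling $p$ with infinite KL to $q$. Boundedness of $U$ handles both. Positivity of $q$ ensures $p_\eta^\star$ has full support, so no support issues arise.
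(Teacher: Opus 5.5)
Your proof is correct, but it takes a different route from the paper. The paper's proof is the Lagrangian computation you sketch as your alternative. It adds a multiplier for $\sum_\tau p(\tau)=1$ and imposes stationarity on coordinates with $p(\tau)>0$ to get $p\propto q\,e^{U/\eta}$. It then fixes the constant by normalization and cites strict concavity of the objective ($\eta>0$, $q$ of full support) for uniqueness.

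Your completion-of-KL identity $\mathbb{E}_p[U]-\eta D_{\mathrm{KL}}(p\|q)=\eta\log Z-\eta D_{\mathrm{KL}}(p\|p_\eta^\star)$ is instead a verification argument, namely the Gibbs variational principle. It needs the candidate in hand beforehand, whereas the Lagrangian derives it. In exchange, Gibbs' inequality gives global optimality and uniqueness in one step, and it also identifies the optimal value $\eta\log Z$.

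The paper's route implicitly relies on finite-dimensional reasoning. Stationarity is imposed only where $p(\tau)>0$, so one must still argue either that the maximizer has full support, or that an interior stationary point of a concave function on the simplex is a global maximizer. When $\mathcal{T}$ is countable, the paper addresses neither existence of a maximizer nor the validity of Lagrange conditions on the infinite-dimensional simplex.

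Your Steps 1--3 handle exactly these points:
\begin{itemize}
\item finiteness and positivity of $Z$;
\item discarding any $p$ with $D_{\mathrm{KL}}(p\|q)=\infty$;
\item justifying the splitting of the infinite sums via boundedness of $U$.
\end{itemize}
So your version covers the statement as written, including the countable case the proposition explicitly allows. The paper's version is the quicker constructive derivation for finite $\mathcal{T}$.
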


\begin{proof}
Fix $s$ and write $U(\tau)=U(s,\tau)$. The Lagrangian for the optimization problem in Proposition~\ref{prop:kl_proximal}, with multiplier $\lambda$ for the normalization constraint $\sum_{\tau}p(\tau)=1$, is
\begin{equation}
\mathcal{L}(p,\lambda)
=
\sum_{\tau}p(\tau)U(\tau)
-
\eta\sum_{\tau}p(\tau)\log\frac{p(\tau)}{q(\tau)}
+
\lambda\Big(\sum_\tau p(\tau)-1\Big).
\end{equation}
For any candidate trajectory with $p(\tau)>0$, stationarity gives
\begin{equation}
U(\tau)-\eta\Big(\log\frac{p(\tau)}{q(\tau)}+1\Big)+\lambda=0,
\end{equation}
and therefore
\begin{equation}
p(\tau)
=
q(\tau)\exp\!\left(\frac{U(\tau)+\lambda-\eta}{\eta}\right).
\end{equation}
The normalizing constraint determines the common factor,
\begin{equation}
\exp\!\left(\frac{\lambda-\eta}{\eta}\right)
=
\left(\sum_{\tau'}q(\tau')\exp(U(\tau')/\eta)\right)^{-1}.
\end{equation}
Substituting this factor yields \Cref{eq:kl_proximal_solution}. Since the objective is strictly concave in $p$ when $\eta>0$ and $q$ has full support, this stationary point is the unique maximizer.
\end{proof}

\subsection{Variance Reduction}
\label{app:proof_variance_reduction}

\begin{proof}
Fix $s$ and suppress the dependence on $(s,\tau)$ for readability. Since $R_\alpha=Y-\alpha C$,
\begin{align}
\Var[R_\alpha]
&=
\Var[Y-\alpha C] \\
&=
\Var[Y]+\Var[\alpha C]-2\Cov[Y,\alpha C] \\
&=
\Var[Y]+\alpha^2\Var[C]-2\alpha\Cov[Y,C],
\end{align}
which proves the variance identity. Differentiating with respect to $\alpha$ gives
\begin{equation}
\frac{\partial}{\partial \alpha}\Var[R_\alpha]
=
2\alpha\Var[C]-2\Cov[Y,C].
\end{equation}
Because $\Var[C]>0$, the unique stationary point is $\alpha^\star=\Cov[Y,C]/\Var[C]$, and the second derivative is $2\Var[C]>0$, so $\alpha^\star$ is the global minimizer.

For $\alpha=1$,
\begin{equation}
\Var[Y-C]-\Var[Y]=\Var[C]-2\Cov[Y,C].
\end{equation}
Thus $\Cov[Y,C]>\frac12\Var[C]$ implies $\Var[Y-C]<\Var[Y]$. If the condition holds for $d_{\mathrm{real}}^{\pi_\theta}$-almost every $s$, taking expectation over $s$ preserves the strict inequality for the expected conditional variances.
\end{proof}

\subsection{Dual Clipping}
\label{app:proof_dualclip}

\begin{proposition}[Bounded Dual-Clipped Surrogate]
\label{prop:dualclip}
Suppose $|\hat A^{\mathrm{gr}}_t|\le A_{\max}$ for all $t$ after clipping, $0<\epsilon_{\mathrm{clip}}<1$, $c>1$, and $\rho_t(\theta)>0$. With $\ell_t(\theta)$ defined in \Cref{eq:lt}, define
\begin{equation}
u_t(\theta)
=
\max(\ell_t(\theta),c\hat A^{\mathrm{gr}}_t)\mathbf{1}_{\hat A^{\mathrm{gr}}_t<0}
+
\ell_t(\theta)\mathbf{1}_{\hat A^{\mathrm{gr}}_t\ge 0}.
\end{equation}
Then, for every $t$,
\begin{equation}
-cA_{\max}\le u_t(\theta)\le (1+\epsilon_{\mathrm{clip}})A_{\max}.
\end{equation}
\end{proposition}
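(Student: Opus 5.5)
The argument is a case split on the sign of $\hat A^{\mathrm{gr}}_t$, since $u_t$ is defined piecewise by that sign. In each case I will use the elementary fact that for $\rho>0$ the clipped ratio $\bar\rho\triangleq\mathrm{clip}(\rho,1-\epsilon_{\mathrm{clip}},1+\epsilon_{\mathrm{clip}})$ lies in $[1-\epsilon_{\mathrm{clip}},1+\epsilon_{\mathrm{clip}}]$, with $1-\epsilon_{\mathrm{clip}}>0$ by assumption.

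\emph{Case $\hat A^{\mathrm{gr}}_t\ge 0$.} Here $u_t=\ell_t=\min(\rho_t\hat A^{\mathrm{gr}}_t,\bar\rho_t\hat A^{\mathrm{gr}}_t)$.

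Both arguments of the minimum are nonnegative, because $\rho_t>0$ and $\bar\rho_t>0$. Hence $u_t\ge 0\ge -cA_{\max}$.

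For the upper bound, the minimum is at most its second argument, so $u_t\le\bar\rho_t\hat A^{\mathrm{gr}}_t\le(1+\epsilon_{\mathrm{clip}})A_{\max}$. This is the step that uses the clipping rather than the raw ratio: $\rho_t$ itself may be unbounded above, but the minimum never exceeds the clipped term.

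\emph{Case $\hat A^{\mathrm{gr}}_t<0$.} Here $u_t=\max(\ell_t,c\hat A^{\mathrm{gr}}_t)$.

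For the lower bound, $u_t\ge c\hat A^{\mathrm{gr}}_t\ge -cA_{\max}$, using $\hat A^{\mathrm{gr}}_t\ge -A_{\max}$.

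For the upper bound, both arguments of the maximum are nonpositive. First, $c\hat A^{\mathrm{gr}}_t<0$. Second, $\ell_t\le\rho_t\hat A^{\mathrm{gr}}_t<0$, since $\rho_t>0$ and $\hat A^{\mathrm{gr}}_t<0$. Therefore $u_t\le 0\le(1+\epsilon_{\mathrm{clip}})A_{\max}$.

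This second case is where the extra $c$-clip matters. Without it, $\ell_t$ could reach $\rho_t\hat A^{\mathrm{gr}}_t$ with $\rho_t$ large, which is unbounded below. The outer maximum with $c\hat A^{\mathrm{gr}}_t$ caps this at $-cA_{\max}$.

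The main point requiring care is sign bookkeeping: checking that $\min$ and $\max$ interact correctly with a negative advantage, and that $\rho_t>0$ and $\epsilon_{\mathrm{clip}}<1$ are exactly what keep each term on the claimed side of zero. The hypothesis $c>1$ is not needed for the bound itself; it only ensures the dual clip is inactive whenever $\rho_t\le c$.

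Combining the two cases gives $-cA_{\max}\le u_t(\theta)\le(1+\epsilon_{\mathrm{clip}})A_{\max}$ for every $t$.
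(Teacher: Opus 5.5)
Your proof is correct and follows the paper's argument essentially step for step: the same split on the sign of $\hat A^{\mathrm{gr}}_t$, using $\mathrm{clip}(\rho_t,1-\epsilon_{\mathrm{clip}},1+\epsilon_{\mathrm{clip}})\le 1+\epsilon_{\mathrm{clip}}$ together with $\ell_t\ge 0$ in the nonnegative case, and $c\hat A^{\mathrm{gr}}_t\le u_t\le 0$ in the negative case. Your side remark that $c>1$ is inessential is also right, although positivity $c>0$ is still used when you multiply $\hat A^{\mathrm{gr}}_t\ge -A_{\max}$ by $c$ and when you claim $c\hat A^{\mathrm{gr}}_t<0$.
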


\begin{proof}
If $\hat A^{\mathrm{gr}}_t\ge 0$, then $u_t(\theta)=\ell_t(\theta)$. Since $\mathrm{clip}(\rho_t(\theta),1-\epsilon_{\mathrm{clip}},1+\epsilon_{\mathrm{clip}})\le 1+\epsilon_{\mathrm{clip}}$, the clipped surrogate satisfies
\begin{equation}
\ell_t(\theta)\le (1+\epsilon_{\mathrm{clip}})\hat A^{\mathrm{gr}}_t\le (1+\epsilon_{\mathrm{clip}})A_{\max}.
\end{equation}
Moreover $\ell_t(\theta)\ge 0$ in this case, and therefore $-cA_{\max}\le u_t(\theta)\le (1+\epsilon_{\mathrm{clip}})A_{\max}$.

If $\hat A^{\mathrm{gr}}_t<0$, then $u_t(\theta)=\max(\ell_t(\theta),c\hat A^{\mathrm{gr}}_t)$. Since $c>1$ and $\hat A^{\mathrm{gr}}_t\ge -A_{\max}$, we have $c\hat A^{\mathrm{gr}}_t\ge -cA_{\max}$, so
\begin{equation}
u_t(\theta)\ge c\hat A^{\mathrm{gr}}_t\ge -cA_{\max}.
\end{equation}
Both $\ell_t(\theta)$ and $c\hat A^{\mathrm{gr}}_t$ are non-positive when $\hat A^{\mathrm{gr}}_t<0$, hence $u_t(\theta)\le 0\le (1+\epsilon_{\mathrm{clip}})A_{\max}$. Combining the two cases proves the bound.
\end{proof}

\subsection{Residual Approximation Bias}
\label{app:proof_bias_compensation}

\begin{proof}
From \Cref{eq:exact_decomposition}, the real gradient can be written as
\begin{equation}
g(\theta)
=
\mathbb{E}
\Big[
\nabla_\theta\log\pi_\theta(\tau\mid s)\,
\big(\Phi_{\pi_\theta}(s,\tau)+\Delta_{\pi_\theta}(s,\tau)\big)
\Big].
\end{equation}
Subtracting this expression from the surrogate gradient in \Cref{eq:ghat} yields
\begin{equation}
\widehat g(\theta)-g(\theta)
=
\mathbb{E}
\Big[
\nabla_\theta\log\pi_\theta(\tau\mid s)\,
\big(\widehat{\Delta}_{\pi_\theta}(s,\tau)-\Delta_{\pi_\theta}(s,\tau)\big)
\Big].
\end{equation}
Let $X=\nabla_\theta\log\pi_\theta(\tau\mid s)$ and $e=\widehat{\Delta}_{\pi_\theta}(s,\tau)-\Delta_{\pi_\theta}(s,\tau)$. Jensen's inequality gives
\begin{equation}
\|\widehat g(\theta)-g(\theta)\|_2
=
\|\mathbb{E}[Xe]\|_2
\le
\mathbb{E}[\|X\|_2|e|].
\end{equation}
Applying Cauchy--Schwarz to the scalar random variables $\|X\|_2$ and $|e|$ gives
\begin{equation}
\mathbb{E}[\|X\|_2|e|]
\le
\big(\mathbb{E}\|X\|_2^2\big)^{1/2}
\big(\mathbb{E}|e|^2\big)^{1/2},
\end{equation}
which proves \Cref{eq:bias_bound_residual}. If $\|X\|_2\le G_{\nabla}$ almost surely, then $\big(\mathbb{E}\|X\|_2^2\big)^{1/2}\le G_{\nabla}$, yielding the bounded-gradient version.
\end{proof}

\section{Bench2Drive Protocol}
\label{app:bench2drive_patch}

\subsection{Motivation for Adjustments}
\label{app:bench2drive_patch_motivation}

Closed-loop training imposes fundamentally different requirements on a benchmark than closed-loop evaluation. Beyond accurate assessment, training requires clear and informative failure signals, as well as visually and physically consistent environment dynamics across policy updates to avoid non-stationarity, together with high rollout efficiency. In particular, rollouts should terminate immediately upon meaningful failure events to provide concise supervision, and unnecessary criteria overhead should be minimized, as rollout time directly dictates training efficiency.

We therefore adopt a patched protocol, Bench2Drive$^{*}$, which preserves the route set, ego-policy interface, and core metrics, while mitigating scenario artifacts that introduce non-policy noise into both training and evaluation.

\subsection{Protocol-Level Adjustments}
\label{app:bench2drive_patch_fixes}

We apply six local adjustments to the official protocol.

\begin{itemize}[leftmargin=1.2em,itemsep=0.25em,topsep=0.25em]
\item \textbf{Actor cleanup}. The official scenario manager can remove background or scenario-related actors at runtime, causing vehicles to disappear from the visual scene. We now clean up only background vehicles that block normal ego progress. Other nonblocking vehicles are kept in autopilot mode, and actors that would be removed immediately after spawning are skipped before they are spawned.

\item \textbf{Timeout handling}. During both data collection and evaluation, we lower the upper bounds for scenario and route timeouts. This prevents a deadlocked route from consuming long simulator time after the policy has already produced the relevant failure signal.

\item \textbf{BicycleFlow spawning}. We fix the BicycleFlow scenario so that cyclists are spawned only within the range specified by the scenario definition. The resulting flow follows normal behavior and avoids abnormal cyclist motion or collisions that are unrelated to the ego policy.

\item \textbf{Route 3749 actor flow}. For route id 3749, we set the actor-flow speed to 12 m/s. This avoids an overly aggressive flow for the turning geometry, where the original setting could produce an invalid scene before the ego policy makes a meaningful decision.

\item \textbf{Collision termination}. In both data collection and evaluation, a collision terminates the scene immediately. The original protocol may continue simulation after a collision; in BicycleFlow, for example, a cyclist can keep moving and generate repeated contacts from the same physical event. Early termination makes the failure state unambiguous and reduces noise in Driving Score. It may slightly lower Driving Score by removing post-collision driving segments, but it does not change Success Rate. We also ignore collision events with relative speed below 0.1 m/s to avoid duplicate contact records.

\item \textbf{Town07 stop sign}. We ignore the invisible stop sign located at $(0,0,0)$ in Town07. This removes spurious stop-sign violations caused by a map artifact rather than by policy behavior.
\end{itemize}

\input{tab/benchmark_robust.tex}

\subsection{Sanity Check Against Official Bench2Drive}
\label{app:bench2drive_patch_effect}

The protocol adjustments aim to produce cleaner RL rollouts without altering the benchmark's qualitative semantics. We therefore compare the official protocol with Bench2Drive$^{*}$ across five representative policies.

\begin{itemize}[leftmargin=1.2em,itemsep=0.2em,topsep=0.2em]
\item PDM-Lite~\cite{sima2024drivelm} is a rule-based CARLA planner. It provides a nonlearned reference point whose behavior is driven by hand-designed planning logic rather than end-to-end perception and control.

\item PlanT v2~\cite{gerstenecker2025plant2} is a privileged planner with access to structured scene information. We include it to test whether the patched protocol remains consistent for high-performing planners that operate outside the sensor-only setting.

\item UniAD~\cite{hu2023_uniad} is a planning-oriented end-to-end driving system. We use it as a representative learned policy with integrated perception, prediction, and planning modules.

\item VAD~\cite{jiang2023vad} is an end-to-end method based on vectorized scene representations. It offers a learned baseline with a different intermediate representation from UniAD while staying in the same sensor-driven evaluation regime.

\item SparseDrive~\cite{sun2025sparsedrive} is an end-to-end method built around sparse scene representations. It is included because its closed-loop scores are close to the regime studied in this paper and because CRAFT is evaluated on SparseDrive-style policies.
\end{itemize}

Together, these policies cover rule-based, privileged, and end-to-end settings, allowing us to check whether the patched protocol changes efficiency and rollout cleanliness without altering the broad ordering of representative methods.

\Cref{tab:benchmark_robust} shows the expected pattern. The adjusted protocol is slightly stricter for strong planners, mainly because terminal events are handled more directly and avoidable post-failure rollout time is removed. For end-to-end methods, the ranking remains broadly stable. UniAD remains ahead of VAD and SparseDrive. Moreover, Bench2Drive$^{*}$ produces substantially lower standard deviations than the official protocol, suggesting a more stable evaluation of closed-loop performance. This lower variance supports single-seed evaluation in the main experiments as a practical and cost-effective protocol for comparing relative method improvements under identical evaluation conditions. In practice, the shorter timeout and cleaner termination rules yield over 2$\times$ higher closed-loop training throughput and over 3$\times$ higher evaluation throughput. These efficiency gains are critical for reinforcement fine-tuning, where closed-loop rollout time is the primary bottleneck.

\section{Algorithm Framework}
\label{app:algorithm_framework}

Algorithm~\ref{alg:craft} gives the training procedure that instantiates the objective in \Cref{eq:overall_loss}. The loop follows the separation used throughout the method section. At each closed-loop state, the policy scores a finite set of candidate trajectories, and only the selected trajectory is passed to the controller to produce the action executed in the real environment. The counterfactual world later evaluates the stored candidate set from the same visited state, where counterfactual returns provide group-normalized counterfactual advantages without executing alternative candidates in CARLA. In parallel, the corrective reward is computed from the executed controller action and the resulting next state, then discounted, scaled, and clipped into the corrective advantage. The update combines the counterfactual advantage and corrective advantage with EMA teacher regularization, aligning the algorithm with the proxy-residual formulation in \Cref{sec:method} and the implementation details reported below.

\begin{algorithm}[ht]
\caption{Counterfactual-to-Interactive Reinforcement Fine-Tuning (CRAFT)}
\label{alg:craft}
\begin{algorithmic}[1]
\Require Pre-trained driving policy $\pi_{\theta_{\mathrm{pre}}}$, controller $\kappa$, closed-loop environment $\mathcal{M}_{\mathrm{real}}$, counterfactual world $\mathcal{E}_{\mathrm{cp}}$, rollout buffer size $B$, candidate group size $G$, weights $\lambda_{\mathrm{cp}},\lambda_{\mathrm{gr}},\beta_r,\beta_f$, EMA momentum $m$
\Ensure Fine-tuned policy $\pi_\theta$
\State Initialize $\theta\leftarrow\theta_{\mathrm{pre}}$ and $\theta_T\leftarrow\theta_{\mathrm{pre}}$
\For{training iteration $k=1,2,\ldots$}
    \State Set $\theta_{\mathrm{old}}\leftarrow\theta$ and clear $\mathcal{B}$
    \While{$|\mathcal{B}|<B$}
        \State Observe $s_t$ and candidates $\{\tau_{t,g}\}_{g=1}^G$
        \State Sample \(\tau_t \sim \pi_{\theta_{\mathrm{old}}}(\cdot\mid s_t)\) and execute $a_t=\kappa(s_t,\tau_t)$ \Comment{closed-loop execution}
        \State Observe $s_{t+1}$ and $r_t^{\mathrm{gr}}=r^{\mathrm{gr}}(s_t,a_t,s_{t+1})$
        \State Store $(s_t,\{\tau_{t,g}\}_{g=1}^G,\tau_t,\log\pi_{\theta_{\mathrm{old}}}(\tau_t\mid s_t),r_t^{\mathrm{gr}})$ in $\mathcal{B}$
    \EndWhile
    \For{each visited state $s_i$ in $\mathcal{B}$}
        \State Run $\mathcal{E}_{\mathrm{cp}}$ on $\{\tau_{i,g}\}_{g=1}^G$ to obtain counterfactual returns $\tilde R_{i,g}$ \Comment{counterfactual reward}
        \State Mask invalid candidates and normalize valid returns to $\tilde A_{i,g}$ as in \Cref{eq:group_adv}
    \EndFor
    \State Compute $\mathcal{L}_{\mathrm{cp}}$ using \Cref{eq:lcp}
    \State Discount corrective rewards with $\gamma_c$, then scale and clip to $\hat A_t^{\mathrm{gr}}$ \Comment{corrective advantage}
    \State Form $\rho_t(\theta)=\pi_\theta(\tau_t\mid s_t)/\pi_{\theta_{\mathrm{old}}}(\tau_t\mid s_t)$ and compute $\mathcal{L}_{\mathrm{gr}}$ using \Cref{eq:lt,eq:lgr}
    \State Compute $\mathcal{L}_{\mathrm{dist}}$ and $\mathcal{L}_{\mathrm{KL}}$ using \Cref{eq:kls} \Comment{on-policy self-distillation}
    \State Update trainable policy parameters by minimizing $\mathcal{L}_{\mathrm{all}}$ in \Cref{eq:overall_loss}
    \State Update $\theta_T\leftarrow m\theta_T+(1-m)\theta$
\EndFor
\State \Return $\pi_\theta$
\end{algorithmic}
\end{algorithm}

\section{Implementation Details}
\label{app:training_details}

This section records the implementation choices used in the experiments in \Cref{sec:exp}. The main paper keeps the method description abstract; here we specify which policy parameters are updated, how candidate futures are simulated, and how the reward and optimization terms are instantiated.

\subsection{Compute Resources}
\label{app:compute_resources}

Our experiments are conducted on a single server equipped with eight NVIDIA RTX 4090D GPUs. CRAFT uses an asynchronous collect-and-train pipeline, where rollout workers interact with CARLA and write synchronized rollout snapshots to a shared buffer. The trainer then waits until enough rollout files are ready and aggregates them into a rollout batch, preprocesses the counterfactual and corrective signals, and performs one Lightning\footnote{\url{https://github.com/Lightning-AI/pytorch-lightning}} training iteration before releasing the collectors to continue with the updated policy version. Each driving policy uses four GPUs for closed-loop rollout collection and four GPUs for policy training; under this configuration, one complete fine-tuning run takes 6--10 hours, depending on the policy's inference efficiency and the trainable modules. This design overlaps closed-loop data collection with policy optimization, which reduces idle time in the main training loop.

\subsection{Trainable Modules of Driving Policies}
\label{app:trainable_modules}

We fine-tune only the modules that score candidate trajectories. All perception, map construction, detection, motion prediction, and trajectory-generation modules remain frozen. This restriction keeps sparse online rewards close to the policy decision boundary. The update can change which trajectory candidate is preferred, but it does not rewrite the visual or geometric driving stack. \Cref{fig:ego_policy_paradigms} illustrates the operational paradigms of these driving policies.

\begin{itemize}[leftmargin=1.2em,itemsep=0.2em,topsep=0.2em]
\item \textbf{HiP-AD~\cite{tang2025hipad}.} The ego decision is represented as joint path-speed candidates with 48 ego modes. CRAFT and the non-PPO baselines update only the final classification branch in the last plan-refinement layer. When PPO is used, a lightweight critic is trained in addition to this branch.

\item \textbf{MindDrive~\cite{fu2025minddrive}.} The high-level speed decision is chosen by a VLM decision expert before the policy generates the low-level path. Fine-tuning updates only the LoRA decision expert used for this high-level decision. The visual encoder, map and detection heads, language backbone, and action expert stay fixed. Under PPO, the critic is the only additional trainable component.

\item \textbf{SparseDriveV2$^{\dagger}$~\cite{sun2026sparsedrivev2}.} The policy scores a 512-mode trajectory vocabulary. Fine-tuning updates only the final trajectory classification branch that assigns probabilities to vocabulary candidates. The trajectory vocabulary, temporal queue, perception backbone, map head, detection head, and trajectory decoder remain frozen. PPO adds the critic, while all other fine-tuning methods update only the classification branch.
\end{itemize}

\begin{figure}[t]
    \centering
    \includegraphics[width=\textwidth]{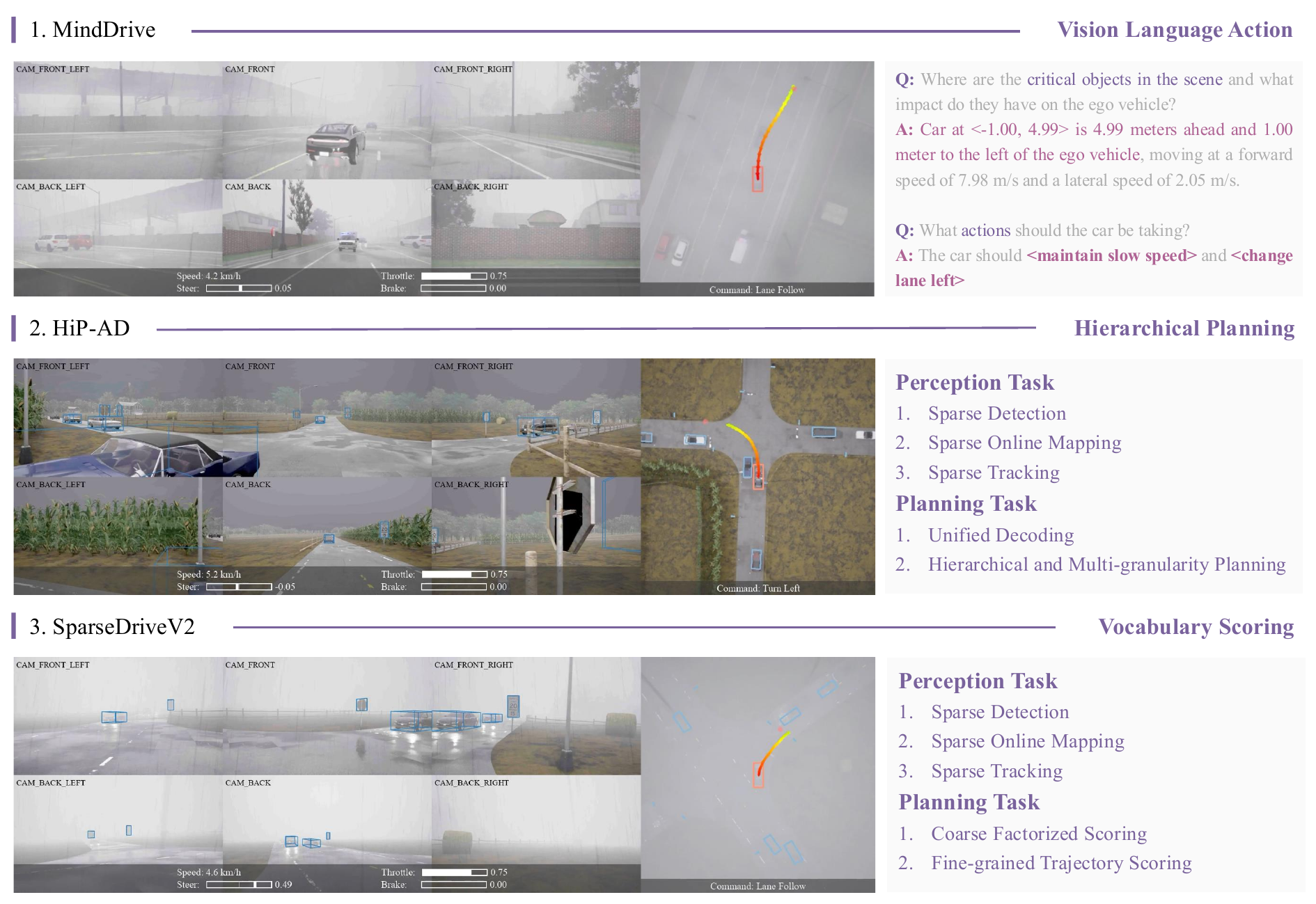}
    \caption{\small \textbf{Operational paradigms of driving policies.}}
    \label{fig:ego_policy_paradigms}
\end{figure}

\input{tab/implementation_hparams}

\subsection{Counterfactual World}
\label{app:counterfactual_engine}

The counterfactual world evaluates the policy candidate set at states visited by real closed-loop rollouts. For each observation, the Bench2Drive agent stores an auxiliary record containing candidate trajectories $\tau_{1:G}$ in the local ego frame, candidate logits, the current ego state, nearby-agent states and actions, route geometry, scenario flags, and nearby map polygons. Ragged candidate counts, route lengths, and agent counts are padded into dense tensors with validity masks before simulation.

By default, each candidate is simulated with a batched trajectory tracker. The tracker prepends the current ego pose to each candidate trajectory, transforms the local trajectory into the global CARLA frame, and flattens the batch and candidate dimensions into independent tracking problems. At each virtual step, a PID controller tracks a short reference segment, and a kinematic bicycle model advances the ego state. Nearby agents use a decay rollout model in which their current actions are held for a short decision window. The model then reduces throttle and ramps up braking, with a shorter ramp when an agent is on a lane connector. Collision checks use rectangular vehicle footprints and batched separating-axis tests.

The counterfactual return for candidate $g$ at real state $s_i$ is
\begin{equation}
\tilde R_{i,g}
=
\sum_{t=0}^{H-1}\gamma^t r^{\mathrm{cp}}_{i,g,t}
-\lambda^{\mathrm{cp}}_{\mathrm{red}}I^{\mathrm{red}}_{i,g}
-\lambda^{\mathrm{cp}}_{\mathrm{stop}}I^{\mathrm{stop}}_{i,g}.
\label{eq:app_cp_return}
\end{equation}
The rollout horizon $H$, simulator step $\Delta t$, and discount $\gamma$ are listed in \Cref{tab:implementation_hparams}; in this setting, the simulated rollout contains the current state plus $H$ future states. Candidate returns are normalized within each valid group as follows:
\begin{equation}
\mu_i=\frac{1}{|G_i|}\sum_{g\in G_i}\tilde R_{i,g},
\qquad
\sigma_i=
\max\!\left(
\sqrt{\frac{1}{|G_i|}\sum_{g\in G_i}(\tilde R_{i,g}-\mu_i)^2},
\sigma_{\min}
\right),
\end{equation}
\begin{equation}
\tilde A_{i,g}=\frac{\tilde R_{i,g}-\mu_i}{\sigma_i}.
\label{eq:app_group_adv}
\end{equation}
Padded candidates are masked out and receive zero advantage. This is the group advantage used by GRPO and by the counterfactual proxy of CRAFT.

\subsection{Reward Design}
\label{app:reward_design}

The implementation uses two rewards. The counterfactual reward is a dense progress-and-infraction signal used to score candidate futures in the counterfactual world; PPO and REINFORCE++ use the closed-loop instance of the same dense design as their rollout reward. The corrective reward is a sparse safety signal used to instantiate grounded residual correction in CRAFT. The two rewards implement the counterfactual proxy and grounded residual correction, respectively. Both designs are inspired by the progress, route-adherence, and safety terms of CaRL~\cite{jaeger2025carl}.

\input{tab/reward_hparams}

\paragraph{Counterfactual reward.}
\label{app:counterfactual_reward}

The counterfactual reward scores efficiency, route adherence, traffic-rule compliance, and collision safety for each simulated candidate. Progress is clipped and normalized as
\begin{equation}
\bar{\Delta p}_t=\mathrm{clip}(\Delta p_t,p_{\min},p_{\max}),
\qquad
\Delta\hat p_t=\frac{\bar{\Delta p}_t}{p_{\max}}.
\end{equation}
The route-efficiency multiplier penalizes route, lane-center, and heading deviations as
\begin{equation}
\eta_t=
\max\!\left(
\exp(-w_g d^g_t)\exp(-w_c d^c_t)\exp(-w_h d^h_t),
\eta_{\min}
\right).
\end{equation}
The recovery term rewards reductions in existing deviations through
\begin{equation}
c_t=
\mathbf{1}[d^g_t>\delta_g^{\mathrm{rec}}]k_g(-\bar{\Delta d}^g_t)
+\mathbf{1}[d^c_t>\delta_c^{\mathrm{rec}}]k_c(-\bar{\Delta d}^c_t)
+\mathbf{1}[d^h_t>\delta_c^{\mathrm{rec}}]k_h(-\bar{\Delta d}^h_t),
\end{equation}
\begin{equation}
r^{\mathrm{rec}}_t
=
\mathrm{clip}(c_t,-c_{\mathrm{clip}},c_{\mathrm{clip}})
\left(a_{\mathrm{rec}}+b_{\mathrm{rec}}\Delta\hat p_t\right),
\end{equation}
where each deviation delta is clipped to $[-d_{\mathrm{clip}},d_{\mathrm{clip}}]$. \Cref{tab:reward_hparams} reports separate route and centerline recovery thresholds; the heading recovery term intentionally reuses the centerline threshold because both deviations are activated at the same local-lane recovery boundary. The collision penalty is scaled by
\begin{equation}
m^{\mathrm{coll}}_t
=
1+\alpha_v\,\mathrm{clip}\!\left(\frac{v_t}{v_{\mathrm{ref}}},\nu_{\min},\nu_{\max}\right).
\end{equation}
The counterfactual reward parameters are summarized in \Cref{tab:reward_hparams}. The per-step counterfactual reward is
\begin{align}
r^{\mathrm{cp}}_t
&=
w_{\mathrm{prog}}\Delta\hat p_t\eta_t
+ r^{\mathrm{rec}}_t
- \lambda^{\mathrm{cp}}_{\mathrm{offroad}} I_{\mathrm{offroad}}
- \lambda^{\mathrm{cp}}_{\mathrm{opp}} I_{\mathrm{opposite}}
- \lambda^{\mathrm{cp}}_{\mathrm{offroute}} I_{\mathrm{offroute}} \notag\\
&\quad
- \lambda^{\mathrm{cp}}_{\mathrm{emg}} I_{\mathrm{emergency}}
- \lambda^{\mathrm{cp}}_{\mathrm{coll}} m^{\mathrm{coll}}_t I_{\mathrm{collision}} .
\label{eq:app_counterfactual_reward}
\end{align}
In the counterfactual world, red-light and stop-sign violations are trajectory-level flags, giving the return-level penalties in \Cref{eq:app_cp_return}. The implementation applies these penalties at the first reward index of the simulated rollout. Collision cost is applied only at the first collision event for each candidate, so high-speed first impacts receive larger penalties.

\paragraph{Corrective reward.}
\label{app:corrective_reward}

The corrective reward is intentionally sparse and safety centered. It is computed after the selected trajectory is converted by the controller into the action actually executed in CARLA, not from alternative virtual candidates:
\begin{align}
r^{\mathrm{gr}}_t
&=
-\lambda^{\mathrm{gr}}_{\mathrm{offroad}} I_{\mathrm{offroad}}
-\lambda^{\mathrm{gr}}_{\mathrm{emg}} I_{\mathrm{emergency}}
-\lambda^{\mathrm{gr}}_{\mathrm{offroute}} I_{\mathrm{offroute}}
-\lambda^{\mathrm{gr}}_{\mathrm{red}} I_{\mathrm{red}}
\notag\\
&\quad
-\lambda^{\mathrm{gr}}_{\mathrm{stop}} I_{\mathrm{stop}}
-\lambda^{\mathrm{gr}}_{\mathrm{coll}} I_{\mathrm{collision}} .
\label{eq:app_corrective_reward}
\end{align}
The traffic-light and stop-sign terms distinguish uncontrolled zones, stop-required zones, and go-required zones. In a stop-required zone, moving faster than $v_{\mathrm{stop}}$ is penalized; in a go-required zone, the wrapper checks whether the path ahead is clear before penalizing motion below $v_{\mathrm{go}}$.

\subsection{Fine-tuning Baselines}
\label{app:objective_details}

The fine-tuners differ in how they consume closed-loop and counterfactual signals. PPO and REINFORCE++ use the dense progress-and-infraction reward on executed rollouts, GRPO uses only counterfactual advantages, and CRAFT combines the counterfactual proxy with the corrective reward.

\paragraph{PPO baseline.}
PPO is the only compared fine-tuner that trains a critic. The datamodule computes generalized advantage estimates with discount $\gamma$ and trace parameter $\lambda_{\mathrm{GAE}}$:
\begin{equation}
\delta_t=r_t+\gamma u^{\mathrm{term}}_tV(s_{t+1})-V(s_t),
\qquad
A_t=\delta_t+\gamma\lambda_{\mathrm{GAE}} u^{\mathrm{done}}_tA_{t+1}.
\end{equation}
Here $u^{\mathrm{term}}_t$ is the continuation mask for value bootstrapping after terminal states, and $u^{\mathrm{done}}_t$ is the continuation mask for recursive return or advantage computation after episode boundaries.
The policy loss uses the standard clipped ratio with range $\epsilon_{\mathrm{clip}}$, and the value loss uses a clipped SmoothL1 target with value-clip range $\epsilon_v$. The critic parameter group uses the actor learning rate multiplied by $\kappa_v$.

\paragraph{REINFORCE++ baseline.}
This value-free baseline removes the critic and uses normalized Monte Carlo returns from the executed rollout,
\begin{equation}
G_t=r_t+\gamma u^{\mathrm{done}}_tG_{t+1},
\qquad
A_t=\frac{G_t-\mu_G}{\sigma_G+\varepsilon_{\mathrm{norm}}}.
\end{equation}
It then uses the same selected-trajectory clipped ratio as PPO, with the shared clip range $\epsilon_{\mathrm{clip}}$.

\paragraph{GRPO baseline.}
This group-relative baseline discards the closed-loop rollout reward and optimizes the normalized counterfactual advantages in \Cref{eq:app_group_adv}. For valid candidates, its objective is
\begin{equation}
\mathcal{L}_{\mathrm{GRPO}}
=
-\mathbb{E}_{i}
\left[
\sum_{g\in G_i}\pi_\theta(\tau_g\mid s_i)\tilde A_{i,g}
\right]
+\beta_r\mathcal{L}_{\mathrm{dist}}
+\beta_f\mathcal{L}_{\mathrm{KL}},
\end{equation}
where $\beta_r$ and $\beta_f$ are the regularization weights in \Cref{tab:implementation_hparams}.

\paragraph{CRAFT.}
The full method keeps the same counterfactual proxy as GRPO, but adds a selected-trajectory update weighted by the corrective reward observed after closed-loop execution. The corrective return is
\begin{equation}
G^{\mathrm{gr}}_t
=
r^{\mathrm{gr}}_t+\gamma_c u^{\mathrm{done}}_tG^{\mathrm{gr}}_{t+1},
\end{equation}
and the implementation scales and clips it as
\begin{equation}
\hat A^{\mathrm{gr}}_t
=
\mathrm{clip}\!\left(\frac{G^{\mathrm{gr}}_t}{S_{\mathrm{gr}}},A_{\min},A_{\max}\right).
\end{equation}
Let $\tau_t$ be the selected trajectory at $s_t$ and $a_t=\kappa(s_t,\tau_t)$ be the executed controller action. The likelihood ratio is $\rho_t=\exp(\log\pi_\theta(\tau_t\mid s_t)-\log\pi_{\mathrm{old}}(\tau_t\mid s_t))$. The grounded residual correction objective uses PPO clipping followed by dual clipping:
\begin{equation}
o_t^{\min}
=
\min\!\left(
\rho_t\hat A^{\mathrm{gr}}_t,\,
\mathrm{clip}(\rho_t,1-\epsilon_{\mathrm{clip}},1+\epsilon_{\mathrm{clip}})\hat A^{\mathrm{gr}}_t
\right),
\end{equation}
\begin{equation}
o_t=
\begin{cases}
\max(o_t^{\min},c\hat A^{\mathrm{gr}}_t), & \hat A^{\mathrm{gr}}_t<0,\\
o_t^{\min}, & \hat A^{\mathrm{gr}}_t\ge 0,
\end{cases}
\end{equation}
The implemented CRAFT loss is
\begin{equation}
\mathcal{L}_{\mathrm{CRAFT}}
=
\lambda_{\mathrm{cp}}\mathcal{L}_{\mathrm{cp}}
+\lambda_{\mathrm{gr}}\mathcal{L}_{\mathrm{gr}}
+\beta_r\mathcal{L}_{\mathrm{dist}}
+\beta_f\mathcal{L}_{\mathrm{KL}},
\end{equation}
where $\lambda_{\mathrm{cp}}$, $\lambda_{\mathrm{gr}}$, $\beta_r$, and $\beta_f$ are set as in \Cref{tab:implementation_hparams}.

\paragraph{Teacher regularization.}
All reported fine-tuners keep a teacher initialized from the pre-trained actor. For the trainable actor subset, the teacher is updated by EMA,
\begin{equation}
\theta_T\leftarrow m\,\theta_T+(1-m)\theta .
\end{equation}
The distillation term is $D_{\mathrm{KL}}(\pi_\theta\|\pi_T)$, and the optional forward-KL term is $D_{\mathrm{KL}}(\pi_T\|\pi_\theta)$. The ego configurations enable the forward-KL term for all reported fine-tuners.

%% file: tab/benchmark_robust.tex
\begin{table}[t]
\centering
\caption{\small \textbf{Evaluation comparison between official Bench2Drive and Bench2Drive$^{*}$.}
Bench2Drive$^{*}$ denotes the patched protocol used for cleaner and more efficient closed-loop training and evaluation. For Bench2Drive$^{*}$, results are reported as mean $\pm$ standard deviation over three random seeds.
}
\label{tab:benchmark_robust}
\resizebox{1.0\textwidth}{!}{
\begin{threeparttable}
\tablestyle{2.0pt}{1.1}
\setlength{\tabcolsep}{1mm}{
\newcolumntype{Y}{>{\centering\arraybackslash}m{13mm}}
\begin{tabular}{c!{\color{black}\vrule width 0.3pt}cYY!{\color{black}\vrule width 0.3pt}cccccc}
    \toprule
    \multicolumn{1}{c}{\multirow{2}{*}{\textbf{Driving Policy}}} & \multicolumn{1}{c}{\multirow{2}{*}{\textbf{Benchmark}}} & \multicolumn{2}{c}{\textbf{Closed-loop Metrics}} & \multicolumn{6}{c}{\textbf{Multi-Ability} (\%)} \\
    \cmidrule(r){3-4}
    \cmidrule(r){5-10}
    \multicolumn{1}{c}{} & \multicolumn{1}{c}{} & DS $\uparrow$ & \multicolumn{1}{Y}{SR (\%) $\uparrow$} & Merging $\uparrow$ & Overtaking $\uparrow$ & Emergency Brake $\uparrow$ & Give Way $\uparrow$ & \multicolumn{1}{c}{Traffic Sign $\uparrow$} & Mean $\uparrow$ \\
    \midrule
    \rowcolor{lightpurple}
    \multicolumn{10}{c}{\textcolor{darkgray}{\textit{Rule-based Planner}}}\\
    \multirow{2}{*}{PDM-Lite~\cite{sima2024drivelm}} & Bench2Drive & 97.02 & 92.27 & - & - & - & - & - & -\\
    & \cellcolor{gray!30}Bench2Drive$^{*}$ & \cellcolor{gray!30}{94.62}\pmsd{0.59} & \cellcolor{gray!30}{88.48}\pmsd{0.69}& \cellcolor{gray!30}{86.30}\pmsd{0.00} & \cellcolor{gray!30}{86.67}\pmsd{2.25} & \cellcolor{gray!30}{98.87}\pmsd{0.98} & \cellcolor{gray!30}{73.33}\pmsd{5.77} & \cellcolor{gray!30}{84.20}\pmsd{2.10} & \cellcolor{gray!30}{85.83}\pmsd{1.25}\\
    \midrule
    \rowcolor{lightpurple}
    \multicolumn{10}{c}{\textcolor{darkgray}{\textit{Privileged Planner}}}\\
    \multirow{2}{*}{PlanTv2~\cite{gerstenecker2025plant2}}  & Bench2Drive & {92.40}\pmsd{1.70} & {83.80}\pmsd{3.30} & - & - & - & - & - & -\\
    & \cellcolor{gray!30}Bench2Drive$^{*}$ & \cellcolor{gray!30}{84.61}\pmsd{0.74} & \cellcolor{gray!30}{76.06}\pmsd{0.69} & \cellcolor{gray!30}{84.20}\pmsd{0.69} & \cellcolor{gray!30}{56.30}\pmsd{2.60} & \cellcolor{gray!30}{88.87}\pmsd{0.98} & \cellcolor{gray!30}{16.67}\pmsd{5.77} & \cellcolor{gray!30}{81.80}\pmsd{1.21} & \cellcolor{gray!30}{65.53}\pmsd{1.35}\\
    \midrule
    \rowcolor{lightpurple}
    \multicolumn{10}{c}{\textcolor{darkgray}{\textit{End-to-End method}}}\\
    \multirow{2}{*}{UniAD~\cite{hu2023_uniad}} & Bench2Drive & 45.81 & 16.36 & - & - & - & - & - & -\\
    & \cellcolor{gray!30}Bench2Drive$^{*}$ & \cellcolor{gray!30}{45.74}\pmsd{0.60} & \cellcolor{gray!30}{20.00}\pmsd{0.79} & \cellcolor{gray!30}{22.90}\pmsd{0.69} & \cellcolor{gray!30}{26.67}\pmsd{6.65} & \cellcolor{gray!30}{15.00}\pmsd{0.00} & \cellcolor{gray!30}{30.00}\pmsd{10.00} & \cellcolor{gray!30}{4.90}\pmsd{1.21} & \cellcolor{gray!30}{19.90}\pmsd{1.01}\\
    \midrule
    \multirow{2}{*}{VAD~\cite{jiang2023vad}} & Bench2Drive & 42.35 & 15.00 & - & - & - & - & - & -\\
    & \cellcolor{gray!30}Bench2Drive$^{*}$ & \cellcolor{gray!30}{44.25}\pmsd{0.67} & \cellcolor{gray!30}{16.06}\pmsd{0.95} & \cellcolor{gray!30}{12.13}\pmsd{1.44} & \cellcolor{gray!30}{5.93}\pmsd{3.42} & \cellcolor{gray!30}{30.53}\pmsd{2.54} & \cellcolor{gray!30}{30.00}\pmsd{10.00} & \cellcolor{gray!30}{2.47}\pmsd{0.64} & \cellcolor{gray!30}{16.23}\pmsd{1.38}\\
    \midrule
    \multirow{2}{*}{SparseDrive~\cite{sun2025sparsedrive}} & Bench2Drive & 44.54 & 16.71 & - & - & - & - & - & -\\
    & \cellcolor{gray!30}Bench2Drive$^{*}$ & \cellcolor{gray!30}{44.21}\pmsd{0.43} & \cellcolor{gray!30}{19.55}\pmsd{0.45} & \cellcolor{gray!30}{14.63}\pmsd{1.44} & \cellcolor{gray!30}{19.30}\pmsd{6.41} & \cellcolor{gray!30}{28.33}\pmsd{2.89} & \cellcolor{gray!30}{36.67}\pmsd{5.77} & \cellcolor{gray!30}{3.53}\pmsd{1.63} & \cellcolor{gray!30}{20.47}\pmsd{0.95}\\
    \bottomrule
\end{tabular}
}
\end{threeparttable}
}
\end{table}

%% file: tab/implementation_hparams.tex
\begin{table*}[t]
\centering
\renewcommand\arraystretch{1.08}
\setlength{\tabcolsep}{4pt}
\small
\caption{\textbf{Main implementation hyperparameters.} The settings are shared across policies unless the table states otherwise. CRAFT uses the counterfactual reward for candidate scoring and the corrective reward for residual correction.}
\label{tab:implementation_hparams}
\resizebox{\textwidth}{!}{
\begin{tabular}{@{}l l l@{}}
\toprule
Component & Setting & Value \\
\midrule
\multirow{14}{*}{Trainer}
& Epochs per update round & 4 \\
& Optimizer & AdamW \\
& Initial / minimum learning rate & $10^{-4}$ / $5{\times}10^{-6}$ \\
& Weight decay & $10^{-5}$ \\
& Gradient clipping & Global norm $0.5$ \\
& Precision / training devices & 32-bit / 4 GPUs \\
& Data decode workers & 8 \\
& Shared rollout discount & $\gamma=0.98$ \\
& GAE trace parameter & $\lambda_{\mathrm{GAE}}=0.95$ \\
& Value clip range & $\epsilon_v=2.0$ \\
& Critic learning-rate multiplier & $\kappa_v=2.0$ \\
& Return-normalization epsilon & $\varepsilon_{\mathrm{norm}}=10^{-8}$ \\
& EMA teacher momentum & $m=0.99$ \\
& Rollout buffer size & $B=2048$ transitions \\
\midrule
\multirow{11}{*}{counterfactual world}
& Tracker & Batched PID trajectory tracker \\
& Virtual time step & $\Delta t=0.1$ s \\
& Rollout horizon & $H=20$ steps, 21 states including the current state \\
& Inference interval & 5 simulator frames \\
& Chunk size & 500 candidate rollouts \\
& Other-agent model & Decay rollout with kinematic bicycle dynamics \\
& Group normalization & Mean/std within valid candidates \\
& Std floor for group advantages & $\sigma_{\min}=5.0$ \\
& Collision speed reference & $v_{\mathrm{ref}}=5.0$ m/s \\
& Collision speed weight & $\alpha_v=0.5$ \\
& Collision speed clip bounds & $[\nu_{\min},\nu_{\max}]=[0,1]$ \\
\midrule
\multirow{9}{*}{Fine-tuning objectives}
& Counterfactual proxy loss weight $\lambda_{\mathrm{cp}}$ & 1.0 \\
& Grounded residual loss weight $\lambda_{\mathrm{gr}}$ & 0.5 \\
& Distillation weight $\beta_r$ & 0.5 \\
& Forward-KL weight $\beta_f$ & 0.1 \\
& PPO clip range $\epsilon_{\mathrm{clip}}$ & 0.2 \\
& Dual-clip constant $c$ & 2.0 \\
& Corrective discount $\gamma_c$ & 0.8 \\
& Corrective return scale & $S_{\mathrm{gr}}=8.0$ \\
& Corrective advantage clip & $[A_{\min},A_{\max}]=[-1,1]$ \\
\bottomrule
\end{tabular}
}
\end{table*}

%% file: tab/reward_hparams.tex
\begin{table*}[t]
\centering
\renewcommand\arraystretch{1.05}
\setlength{\tabcolsep}{4pt}
\scriptsize
\caption{\textbf{Reward hyperparameters.} Superscripts distinguish the counterfactual proxy reward from the grounded residual corrective reward; these instantiate the dense counterfactual proxy and grounded residual correction, respectively.}
\label{tab:reward_hparams}
\resizebox{\textwidth}{!}{
\begin{tabular}{@{}l l l l@{}}
\toprule
Reward & Symbol & Value & Role \\
\midrule
\multirow{16}{*}{Counterfactual reward}
& $p_{\min}$ / $p_{\max}$ & 0.0 / 1.2 & Progress clipping bounds \\
& $w_{\mathrm{prog}}$ & 8.0 & Progress weight \\
& $w_g$ / $w_c$ / $w_h$ & 3.0 / 0.8 / 2.0 & Route, centerline, and heading efficiency weights \\
& $\eta_{\min}$ & 0.0 & Minimum route-efficiency multiplier \\
& $d_{\mathrm{clip}}$ & 0.10 & Deviation-delta clipping bound \\
& $\delta_g^{\mathrm{rec}}$ / $\delta_c^{\mathrm{rec}}$ & 0.08 / 0.08 & Recovery activation thresholds \\
& $k_g$ / $k_c$ / $k_h$ & 0.4 / 0.2 / 0.4 & Recovery weights \\
& $c_{\mathrm{clip}}$ & 0.5 & Recovery reward clipping bound \\
& $a_{\mathrm{rec}}$ / $b_{\mathrm{rec}}$ & 0.5 / 0.5 & Progress-dependent recovery scaling \\
& $\lambda^{\mathrm{cp}}_{\mathrm{offroad}}$ & 1.5 & Off-road step cost \\
& $\lambda^{\mathrm{cp}}_{\mathrm{opp}}$ & 0.1 & Opposite-lane step cost \\
& $\lambda^{\mathrm{cp}}_{\mathrm{offroute}}$ & 1.5 & Off-route step cost \\
& $\lambda^{\mathrm{cp}}_{\mathrm{emg}}$ & 1.0 & Emergency-lane step cost \\
& $\lambda^{\mathrm{cp}}_{\mathrm{coll}}$ & 40.0 & First-collision penalty \\
& $\lambda^{\mathrm{cp}}_{\mathrm{red}}$ & 40.0 & Red-light trajectory penalty \\
& $\lambda^{\mathrm{cp}}_{\mathrm{stop}}$ & 40.0 & Stop-sign trajectory penalty \\
\midrule
\multirow{8}{*}{Corrective reward}
& $\lambda^{\mathrm{gr}}_{\mathrm{offroad}}$ & 0.5 & Off-road penalty \\
& $\lambda^{\mathrm{gr}}_{\mathrm{emg}}$ & 0.2 & Emergency-lane penalty \\
& $\lambda^{\mathrm{gr}}_{\mathrm{offroute}}$ & 0.5 & Off-route penalty \\
& $\lambda^{\mathrm{gr}}_{\mathrm{red}}$ & 2.0 & Red-light violation penalty \\
& $\lambda^{\mathrm{gr}}_{\mathrm{stop}}$ & 2.0 & Stop-sign violation penalty \\
& $\lambda^{\mathrm{gr}}_{\mathrm{coll}}$ & 5.0 & Collision penalty \\
& $v_{\mathrm{stop}}$ & 0.1 m/s & Stop-required motion threshold \\
& $v_{\mathrm{go}}$ & 2.0 m/s & Go-required slow-motion threshold \\
\bottomrule
\end{tabular}
}
\end{table*}